\newcommand{\NA}{---}
\newtheorem{assumption}{Assumption}
\newtheorem{thm}{Theorem}
\newtheorem*{remark}{Remark}
\newtheorem{defi}{Definition}
\newtheorem{coro}{Corollary}
\newcommand{\myparatight}[1]{\smallskip\noindent{\bf {#1}:}~}
\newcommand{\alg}{FLForensics\xspace}
\newcommand{\algns}{FLForensics}
\algnewcommand\algorithmicforpara{\textbf{for}}
\algnewcommand\algorithmicdoinparallel{\textbf{do in parallel}}
\title{Tracing Back the Malicious Clients in Poisoning Attacks to Federated Learning}
\begin{document}

\maketitle

\vspace{-2cm}

\begin{center}
\begin{tabular}{ccccccc}
\textbf{Yuqi Jia} && \textbf{Minghong Fang} && \textbf{Hongbin Liu}  \\
Duke University && University of Louisville && Duke University \\
yuqi.jia@duke.edu &&  minghong.fang@louisville.edu && hongbin.liu@duke.edu  \\ 
\end{tabular}
  \vspace{.5cm}

\begin{tabular}{ccccc}
\textbf{Jinghuai Zhang} && \textbf{Neil Gong} \\
University of California, Los Angeles && Duke University \\
jinghuai1998@g.ucla.edu &&neil.gong@duke.edu \\ 
\end{tabular}
\end{center}
 \vspace{.5cm}

\begin{abstract}
Poisoning attacks compromise the training phase of  federated learning (FL)  such that the learned global model misclassifies attacker-chosen inputs called \emph{target inputs}. Existing defenses mainly focus on protecting the training phase of FL such that the learnt global model is poison free. However, these defenses often achieve limited effectiveness when the clients' local training data is highly non-iid or the number of malicious clients is large, as confirmed in our experiments. In this work,  we propose \emph{\algns}, the first \emph{poison-forensics} method for FL. \alg complements existing training-phase defenses. In particular, when training-phase defenses fail and a poisoned global model is deployed, \alg aims to trace back the malicious clients that performed the poisoning attack after a misclassified target input is identified.   We theoretically show that \alg can accurately distinguish between benign and malicious clients under a formal definition of poisoning attack. Moreover, we empirically show the effectiveness of \alg at tracing back both existing and adaptive poisoning attacks on five benchmark datasets. Our code and data are available at: \url{https://github.com/jyqhahah/FLForensics}.

\end{abstract}

\section{Introduction} \label{sec:intro}
Federated learning (FL)~\cite{mcmahan2017communication} is a  distributed learning paradigm, allowing many clients jointly train a \emph{global model} without sharing raw data. Specifically, in each round the server broadcasts the current model, clients update it on their private data, and the server aggregates the updates~\cite{mcmahan2017communication}. FL has been widely deployed in various real-world applications, such as credit risk prediction~\cite{webank} and next-word prediction~\cite{gboard}. However, FL’s distributed updates make it prone to poisoning attacks: \emph{malicious clients} can submit crafted updates that the server accepts~\cite{fang2020local,baruch2019little,wang2020attack,xie2025model,cao2022mpaf,fang2021data}. The resulting global model maps an attacker‑chosen \emph{target input} to an attacker‑chosen \emph{target label} while leaving other predictions intact. This target input may be any sample carrying an injected trigger (a backdoor) or even a clean sample without a trigger.

Existing defenses~\cite{yin2018byzantine,cao2020fltrust,nguyen2022flame,fang2022aflguard,zhang2022fldetector,fang2025provably,fang2024byzantine} against poisoning attacks to FL focus on \emph{protecting the training phase}. Robust aggregators such as Trim, Median~\cite{yin2018byzantine}, FLTrust~\cite{cao2020fltrust}, and FLAME~\cite{nguyen2022flame} try to filter potentially malicious updates, while detectors like FLDetector detects clients whose updates are inconsistent across multiple rounds~\cite{zhang2022fldetector}. However, these training-phase defenses are insufficient. In particular, when data are highly non‑IID or attackers control many clients, malicious and benign updates become hard to distinguish, as shown in our experiments. Consequently, even if these training-phase defenses are adopted, the learnt global model may still be poisoned and the poisoned global model is deployed.

\myparatight{Our work} In this work, we propose \emph{\algns}, the first \emph{poison‑forensics} method for FL. Unlike training‑phase defenses, \alg aims to trace back the malicious clients that performed the poisoning attack after the attack has happened, i.e., after training-phase defenses fail, a poisoned global model has been deployed, and a misclassified target input has been identified. {Identifying such a misclassified input—e.g., via manual inspection or automatic tools~\cite{gao2019strip,chou2020sentinet,ma2022beatrix}—is orthogonal to \alg.} For instance,  we show that our \alg can be adapted to detect whether a misclassified input is a misclassified target input or not in Appendix~\ref{sec:append_otherexp}.  
\alg consists of two major steps: \emph{calculating influence scores} and \emph{detecting malicious clients}. Step I assigns each client an influence score for the misclassified \emph{target input}, and Step II uses these scores to distinguish \emph{malicious} from benign clients.

{\bf Calculating influence scores.} To quantify the misclassification of a target input, we measure each client’s effect by the change it causes in the global model’s cross‑entropy loss on the misclassified \emph{target input} across all rounds. One challenge is that clients' local training data are often  non-iid. 
Specifically, some benign clients (denoted as \emph{Category I}) have a large amount of local training examples with the target label, while other benign clients (\emph{Category II}) do not. As a result, both malicious clients and Category I benign clients have large influence scores, making it challenging to distinguish them. To separate them, the server also tests every client on a random \emph{non‑target input} with the target label, yielding another influence score. Thus client $i$ receives a two‑dimensional score $(s_i,s_i')$ from the target and non‑target inputs, respectively.

{\bf Detecting malicious clients.} We observe that {malicious} clients have large $s_i$ but small $s_i^{\prime}$, {Category I} benign clients have large $(s_i,s_i^{\prime})$, and {Category II} benign clients have small $(s_i,s_i^{\prime})$. Based on such observations, we detect malicious clients by clustering their 2‑D scores with HDBSCAN~\cite{campello2013density}, which needs no preset cluster count. We further use a \emph{scaled Euclidean distance}, which normalizes the two dimensions of a two-dimensional influence score to have the same, comparable scale. Clusters with positive mean $s_i$ become \emph{potentially malicious}. However, these clusters may also include Category I benign clients. To address the challenge, our key observation is that the influence-score gap   $s_i^{\prime}-s_i$ of a malicious client is smaller than that of a Category I benign client. \alg leverages this to pinpoint the truly \emph{malicious} clients inside each cluster.

{\bf Theoretical and empirical evaluation.}  Theoretically, we show the security of \alg  against poisoning attacks. In particular, based on a formal definition of poisoning attacks and mild assumptions, we show that 1) both malicious clients and Category I benign clients have larger influence scores $s_i$ than Category II benign clients, and 2) a malicious client has a smaller influence-score gap $s_i'-s_i$ than a benign client. 
Empirically, we comprehensively evaluate \alg on five benchmark datasets. Our results show that \alg can accurately trace back  malicious clients under various existing and adaptive attacks. We note that training-phase defenses are ineffective for most attack scenarios in our evaluation.

We summarize our main contributions  as follows:

\begin{itemize}

 \item We propose the first poison-forensics method called \alg to trace back malicious clients in FL. 
	
\item We theoretically show the security of \alg against poisoning attacks.

    \item We empirically evaluate \alg on five benchmark datasets against existing and adaptive poisoning attacks.
 
\end{itemize}

\section{Preliminaries and Related Work}
\label{sec:preliminaries}

\myparatight{Federated learning (FL)} 
FL enables $n$ clients to collaboratively train a shared \emph{global model} under a central server’s coordination. Each client updates the global model using its local data and sends a model update to the server, which aggregates them (e.g., via FedAvg~\cite{mcmahan2017communication}) to update the global model:
\begin{align}
\label{FLaggregation}
w_{t+1} = w_t + \alpha_t \cdot Agg(g_t^{(1)}, g_t^{(2)}, \cdots, g_t^{(n)}),
\end{align}
where $\alpha_t$ is the learning rate. In practice, only a subset of clients is selected per round. Many FL variants~\cite{mcmahan2017communication,yin2018byzantine,cao2020fltrust,nguyen2022flame,jia2024unlocking,cao2022flcert,xie2024fedredefense,zhang2022fldetector,fang2025we} differ primarily in their aggregation rules.

\myparatight{Poisoning attacks to FL}
Poisoning attacks aim to corrupt the training process so that the resulting global model is compromised. In \emph{targeted poisoning attacks}~\cite{bagdasaryan2020backdoor,wang2020attack}, the model misclassifies attacker-chosen inputs into a target label while maintaining overall accuracy. We refer to these simply as poisoning attacks. Some attacks use \emph{trigger-embedded} target inputs (i.e., backdoor attacks~\cite{baruch2019little,bagdasaryan2020backdoor}), where any input with a trigger is misclassified. Others use \emph{triggerless} target inputs~\cite{wang2020attack}, which are naturally occurring but mislabeled edge cases. In both cases, malicious clients manipulate their local data or model updates to implant the attack during training. Details are deferred to Appendix~\ref{app:poison_details}.

\myparatight{Training-phase defenses}
Most defenses aim to secure the training phase to prevent a poisoned global model. Some approaches improve the aggregation rule to tolerate malicious updates, e.g., Trimmed Mean, Median~\cite{yin2018byzantine}, FLTrust~\cite{cao2020fltrust}, and FLAME~\cite{nguyen2022flame}. Others~\cite{cao2021provably,cao2022flcert} offer provable guarantees by training multiple global models and using ensemble prediction. See Appendix~\ref{app:defense_details} for more details. Furthermore, some defenses focus on detection and recovery from attacks. FLDetector~\cite{zhang2022fldetector} detects clients with inconsistent model updates, and FedRecover~\cite{cao2023fedrecover} reconstructs a clean model without retraining from scratch.

However, these training-phase defenses suffer from a few key limitations. Robust FL methods still struggle when malicious clients are numerous or client data is highly non-iid. Moreover, FLDetector cannot detect data poisoning attacks where malicious clients poison data but follow protocol. Consequently, a poisoned model may still be deployed. In our work, we assume a poisoned global model is already deployed. Given a misclassified target input detected post-deployment, our goal is to trace back the malicious clients responsible for the attack.

\myparatight{Poison forensics for centralized learning}
Poison forensics methods~\cite{shan2022poison,hammoudeh2022identifying,cheng2023beagle} trace the source of misclassification in centralized learning. PF~\cite{shan2022poison} and GAS~\cite{hammoudeh2022identifying} identify poisoned training data responsible for a misclassification, while Beagle~\cite{cheng2023beagle} recovers triggers from multiple poisoned inputs. 
These methods assume centralized access to training data and do not generalize well to FL, as shown in our experiments. In contrast, our \alg is the first poison-forensics method tailored to FL, capable of identifying malicious clients post-deployment. Interestingly, our method also performs well in centralized settings with unbalanced data, where existing methods degrade (see Appendix~\ref{sec:append_otherexp}).

\vspace{-1mm}
\section{Threat Model} \label{sec:threatmodel} 
\vspace{-2mm}
\begin{figure*}[!t]
    \centering  
    \includegraphics[width=0.83 \textwidth]{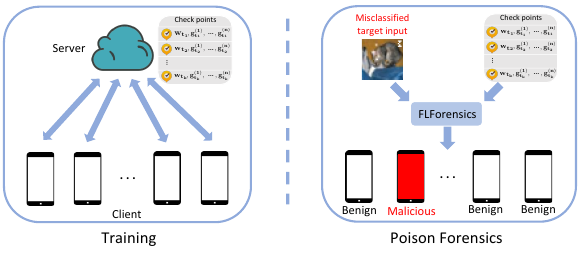}
    \caption{Overview of \algns. During training, the server stores the intermediate global models and clients' model updates in some training rounds called \emph{check points}. Given  a misclassified target input  detected after deploying the poisoned global model, the server uses \alg  to trace back the malicious clients that performed the poisoning attack. }
    \label{fig:overview}
\vspace{-4mm}
\end{figure*}

\myparatight{Poisoning attacks}
We assume an attacker compromises the FL training by controlling a set of \emph{malicious clients}, which may be fake or compromised genuine ones. These clients craft model updates that poison the global model, while the server remains honest. After training, the poisoned model is deployed for real-world use. In this work, we focus on \emph{targeted} poisoning attacks, where the model misclassifies attacker-chosen \emph{target inputs} as an attacker-specified \emph{target label}, while behaving normally on other inputs. {Appendix~\ref{appendix:untargeted} provides discussion of untargeted poisoning attacks.}

\myparatight{Poison forensics}
We adopt a standard poison-forensics setting~\cite{shan2022poison,hammoudeh2022identifying,cheng2023beagle}: a misclassified target input in a poisoning attack is detected after model deployment. Detection can be done automatically~\cite{gao2019strip,chou2020sentinet,ma2022beatrix} or manually by users observing application errors caused by the misclassification.
Given such a misclassified target input, the goal is to trace back the malicious clients responsible for the attack. In Section~\ref{sec:discussion}, we further show that \alg can help determine whether a misclassified input is indeed a target input.

\section{Our {\alg}} 
\label{our_method}

\subsection{Overview}
During training, \alg stores intermediate global models and client updates as \emph{check points}. When a target input is misclassified, \alg proceeds in two steps: (1) it computes each client’s \emph{influence score} from the stored check points, and (2) it detects malicious clients via clustering the influence scores. Figure~\ref{fig:overview} shows the workflow and Algorithm~\ref{alg::tracebackFL} provides the pseudo‑code.

\begin{figure}[!t]
    \centering  
    \subfloat[]{
    \label{fig:onepoint_score}
    \includegraphics[width=0.4 \columnwidth]{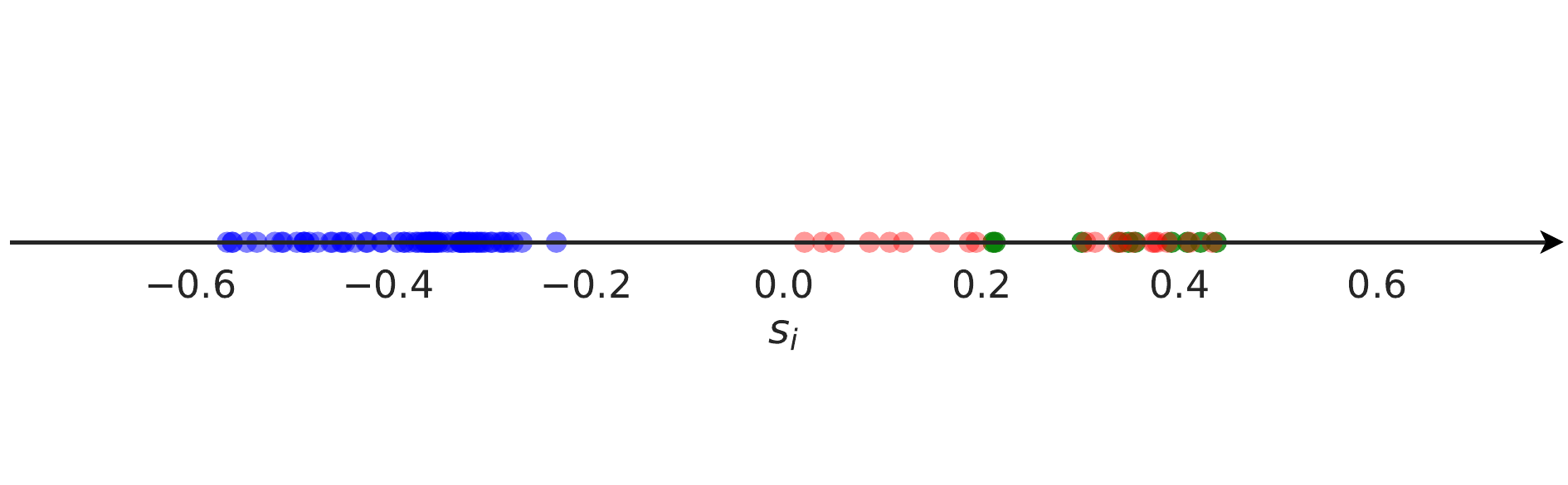}
    }
    \subfloat[]{
    \label{Fig.sub.1}
    \includegraphics[width=0.25 \textwidth]{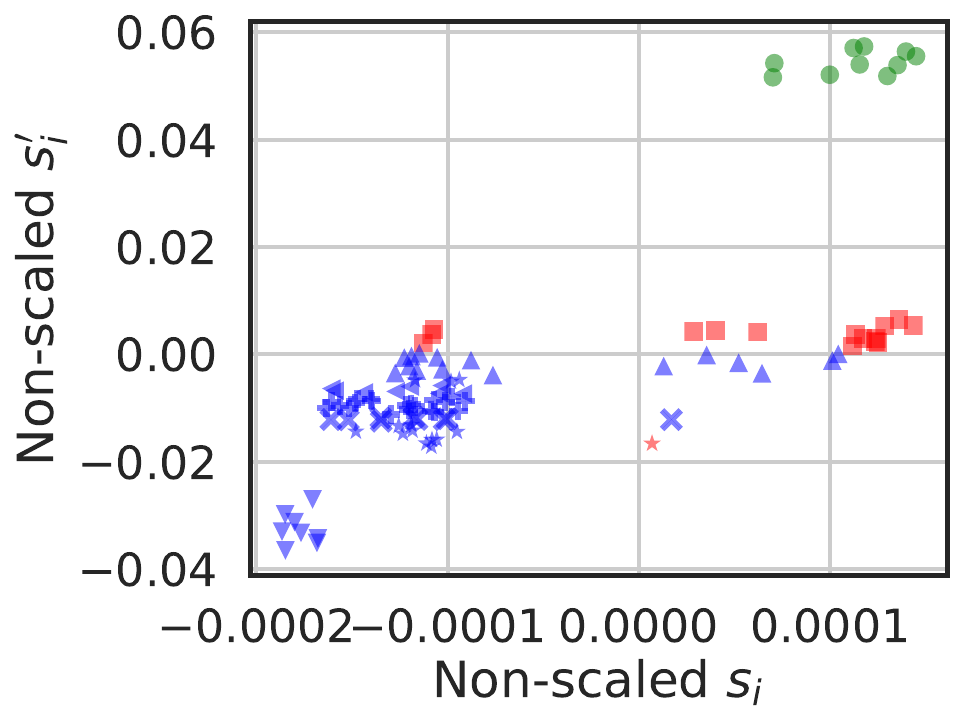}}
    \subfloat[]{
    \label{Fig.sub.2}
    \includegraphics[width=0.245 \textwidth]{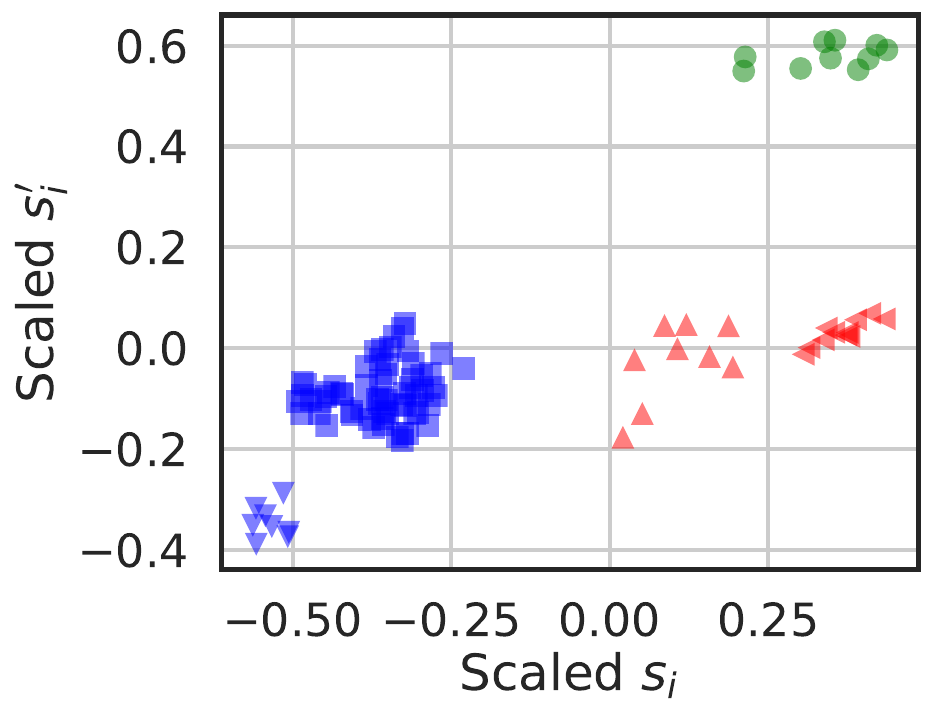}}
    \caption{(a) Influence scores $s_i$ and (b–c) clustering results in one of our experiments using Euclidean and scaled Euclidean distance. Dots represent clients: red (malicious), green (Category I benign), and blue (Category II benign). Different \emph{markers} represent different HDBSCAN clusters.
    }
    \label{fig:renorm_score}
\end{figure}

\subsection{Calculating Influence Scores}
\label{sec:stepI}
\myparatight{Quantifying a misclassification} We denote a target input as ${x}$, which is misclassified as the target label ${y}$ by the poisoned global model $w$. To measure a client's contribution to misclassification, we use the cross-entropy loss $\ell_{CE}({x}, {y}; w)$ of the poisoned global model $w$. Denote $w_t$ as the 
global model in training round $t$, where $t=1, 2, \cdots, R$, then $w_0$ is the initial global model and $w_R$ is the final model after $R$ rounds. The overall loss change is $\ell_{CE}(x, y; w_0) - \ell_{CE}(x, y; w_R)$, which we attribute to clients' model updates across rounds.

\begin{algorithm}[!ht]
  \caption{\algns} 
  \label{alg::tracebackFL}
  \begin{algorithmic}[1]
    \Require
     Misclassified target input ${x}$,
      target label ${y}$, non-target input $x^{\prime}$, 
       check points $\Omega=\{t_1, t_2, \cdots, t_k\}$, global models $\{w_t\}_{t\in \Omega}$ in the check points, 
     selected clients $C_t$ in each check point $t\in \Omega$, and clients' model updates $\{g_{t}^{(i)}\}_{t\in \Omega, i\in C_t}$. 
    \Ensure
    Predicted malicious clients $\mathcal{M}$.
    \State //Calculating influence scores
    \For{$i = 1$ to $n$}
          \State $s_i = - \sum_{t\in \Omega \text{ s.t. } i\in C_t}\alpha_t \nabla \ell_{CE} ({x}, {y}; w_t)^{\top}g_t^{(i)}$; 
        \State $s^{\prime}_i = - \sum_{t\in \Omega \text{ s.t. } i\in C_t}\alpha_t \nabla \ell_{CE} (x^{\prime}, {y}; w_t)^{\top} g_t^{(i)}$;
    \EndFor
    \State $\mathcal{I}\gets \{(s_1, s^{\prime}_1), (s_2, s^{\prime}_2), \cdots, (s_n, s^{\prime}_n)\}$;
    
    \State //Detecting malicious clients 
    \State $c_1, \cdots, c_m, c_{outlier}\gets \text{HDBSCAN}(\mathcal{I})$; \Comment{$c_{outlier}$ is a set that contains all outliers if any.}
    
    \State $c_{p_1}, c_{p_2}, \cdots,c_{p_v} \gets$ clusters with average $s_i>0$; \Comment{Get potential malicious clusters}. \label{c_p_v}
    \State $threshold\gets \sum_{j=1}^v\sum_{i\in c_{p_j}}s_i^{\prime}/\sum_{j=1}^v\sum_{i\in c_{p_j}} s_i$;
    \State $\mathcal{M}=\emptyset$;
    \For{$j = 1$ to $v$}
        \State $threshold_j\gets \sum_{i\in c_{p_j}}s_i^{\prime}/\sum_{i\in c_{p_j}}s_i$;
        \If{$threshold_j \leq threshold$}
            $\mathcal{M}$ = $\mathcal{M}\cup c_{p_j}$; 
        \EndIf
    \EndFor
    \For{$i$ in $c_{outlier}$}
        \State $threshold_i\gets s_i^{\prime}/s_i$;
        \If{$threshold_i \leq threshold$}
            $\mathcal{M}$ = $\mathcal{M}\cup \{i\}$;
        \EndIf
    \EndFor
    \State\Return $\mathcal{M}$;
  \end{algorithmic}
\end{algorithm}

\myparatight{Expanding the training process} Since the loss change is accumulated over the $R$ training rounds, we expand them to measure the influence of each client on the loss change. Specifically, according to Taylor expansion, for training round $t$, we have $\ell_{CE} ({x}, {y}; w_{t+1}) \approx  \ell_{CE} ({x}, {y}; w_t) + \nabla \ell_{CE} ({x}, {y}; w_t)^{\top}  (w_{t+1} - w_t)$.
Therefore, by summing over $R$ training rounds, we have:
\begin{align}
\label{taylorframework1}
    \ell_{CE}(x, y; w_0) - \ell_{CE}(x, y; w_R) \approx -\sum_{t=1}^{R} \nabla \ell_{CE}(x, y; w_t)^\top (w_{t+1} - w_t).
\end{align}
\myparatight{Assigning influence scores}
If a client is selected in training round $t$, we quantify its contribution to the model difference $w_{t+1} - w_t$. Then we obtain an influence score for it by summing over such contributions over multiple training rounds that involve this client. Let $C_t$ be the set of selected clients in round $t$. If we we assume the global model is updated as if using only the model update $g_t^{(i)}$ of client $i$, then $w_{t+1} - w_t \approx \alpha_t \cdot g_t^{(i)}$, and $i$'s influence score is:
\begin{align}
\label{influencescoreFL-one}
    s_i = -\sum_{t \text{ s.t. } i \in C_t} \alpha_t \nabla \ell_{CE}(x, y; w_t)^\top g_t^{(i)}.
\end{align}
\myparatight{Using check points to save space and computation} If \alg uses all training rounds to calculate the influence scores, the server needs to save global models and clients' updates in all training rounds, which incurs substantial space and computation overhead. To reduce overhead, we compute influence scores using a subset of $k$ \emph{check points} $\Omega = \{t_1, ..., t_k\}$:
\begin{align}
\label{influenceframework}
     s_i = -\sum_{t \in \Omega \text{ s.t. } i \in C_t} \alpha_t \nabla \ell_{CE}(x, y; w_t)^\top g_t^{(i)}.
\end{align}
The space cost is linear to the model size, number of check points, and active clients per round.

\myparatight{Using a non-target input to augment clients' influence scores} Due to non-iid data, some benign clients (denoted as \emph{Category I}) may have many examples labeled ${y}$ and thus yield high $s_i$, similar to malicious clients. Other benign clients (\emph{Category II}) do not, and yield lower scores. In particular, our theoretical analysis in Appendix~\ref{app:sec_ana} shows that malicious clients and Category I benign clients both have larger influence scores than Category II benign clients. Figure~\ref{fig:onepoint_score} shows the influence scores $s_i$ of malicious, Category I benign, and Category II benign clients in an experiment. To differentiate them, we compute a second influence score using a \emph{non-target input} $x'$:
\begin{align}
\label{influencescoreFL-two}
    s_i^{\prime} = -\sum_{t \in \Omega \text{ s.t. } i \in C_t} \alpha_t \nabla \ell_{CE}(x', y; w_t)^\top g_t^{(i)}.
\end{align}
$x'$ can be either a random or true input, and we show in Appendix~\ref{sec:append_otherexp} that both choices yield similar performance. This gives each client a two-dimensional influence score $(s_i, s_i^{\prime})$. Typically, Category I benign clients have both scores large, while malicious clients have a large $s_i$ but a small $s_i^{\prime}$.

\subsection{Detecting Malicious Clients}
\label{sec:stepII}

Our method is based on two observations: (\textbf{I}) Both malicious and Category I benign clients have larger $s_i$ than Category II clients. (\textbf{II}) Malicious clients have smaller gaps $s_i' - s_i$ than benign clients. {We provide theoretical justification for these observations in Appendix~\ref{app:sec_ana}.}

\myparatight{Clustering the clients via HDBSCAN with scaled Euclidean distance} Let $\mathcal{I} = \{(s_1, s_1^{\prime}), ..., (s_n, s_n^{\prime})\}$ denote client scores. We cluster clients using HDBSCAN~\cite{campello2013density}, which does not require specifying the number of clusters and handles outliers. To account for differences in score scales, we normalize each score dimension by its range and compute scaled Euclidean distance, i.e., $s_i$ as $s_i/(\max_{j=1}^n s_j - \min_{j=1}^n s_j)$, and $s_i^{\prime}$ as $s_i^{\prime}/(\max_{j=1}^n s_j^{\prime} - \min_{j=1}^n s_j^{\prime})$. Figures~\ref{Fig.sub.1} and~\ref{Fig.sub.2} show the improved separation using this distance metric.

\myparatight{Identifying malicious clients and Category I benign clients based on Observation I} We treat clusters with positive average $s_i$ as \emph{potentially malicious}, since both malicious and Category I benign clients can fall into this category. For instance, in Figure~\ref{Fig.sub.2}, both red clusters (malicious) and the green cluster (Category I benign) are potential malicious clusters.

\myparatight{Distinguishing between malicious clients and Category I benign clients based on Observation II} To further separate malicious and Category I benign clients, we compute the ratio of average $s_i^{\prime}$ to $s_i$ in each potential malicious cluster. Based on Observation II, a cluster is classified as malicious if this ratio is below a \emph{threshold}. We set it to the mean ratio across all such clusters. Furthermore, HDBSCAN may output some clients as outliers that do not belong to any cluster. Outlier clients are handled similarly by comparing their $s_i^{\prime}/s_i$ to the same threshold. Figure~\ref{Fig.sub.2} illustrates how this approach identifies the red clusters as malicious.

\section{Experiments}  \label{sec:experment}

\subsection{Experimental Setup}
\vspace{-2mm}
\myparatight{Datasets} 
We conduct our experiments using five diverse benchmark datasets: four image datasets (CIFAR-10, Fashion-MNIST, MNIST, and ImageNet-Fruits) and one text dataset (Sentiment140). 
Detailed descriptions of these datasets are provided in Section~\ref{sec:append_dataset} in Appendix.

\myparatight{FL training settings} 
Following~\cite{cao2020fltrust,fang2020local}, we model FL with 100 clients.
For CIFAR‑10, Fashion‑MNIST, MNIST, and ImageNet‑Fruits we create non‑IID partitions using the method of~\cite{fang2020local} (Appendix~\ref{exp:noniid_setting} shows the details).
Since Sentiment140 already exhibits user‑level non‑IID, we simply group users uniformly at random into 100 clients. We train a ResNet-20~\cite{he2016deep} for CIFAR-10, a CNN (Table~\ref{cnn_arch} in Appendix) for Fashion-MNIST and MNIST, a LSTM~\cite{hochreiter1997long} for Sentiment140, and a ResNet-50~\cite{he2016deep} for ImageNet-Fruits. Default hyper‑parameters (learning rate, batch size, global rounds, local epochs) are listed in Table~\ref{fl_para_setting} in Appendix. By default, every round involves all clients and the server aggregates via FedAvg~\cite{mcmahan2017communication}. We will also conduct experiments to vary the client fraction and the aggregation rule to test \alg.

\myparatight{Poisoning attacks to FL}  We consider three popular poisoning attacks to FL, i.e., \emph{Scaling}~\cite{bagdasaryan2020backdoor}, \emph{A little is enough (ALIE)}~\cite{baruch2019little}, and \emph{Edge}~\cite{wang2020attack} attacks. Scaling and ALIE use trigger-embedded target inputs, while Edge uses triggerless target inputs. The description of those attacks are shown in Appendix~\ref{sec:append_attack}. By default, we assume there are 20\% malicious clients, who perform attacks in each training round. We will also explore the impact of the fraction of malicious clients and fraction of attacked training rounds on \algns. 

\myparatight{Evaluation metrics} 
We evaluate with \emph{detection accuracy (DACC)}, \emph{false positive rate (FPR)}, and \emph{false negative rate (FNR)}. DACC is the fraction of clients correctly classified; FPR the fraction of benign clients classified as malicious; FNR the fraction of malicious clients missed. Higher DACC and lower FPR/FNR indicate a better method. We report the \emph{attack success rate (ASR)} {in Table~\ref{table_tacc_asr}}, which means the fraction of {target inputs} that the poisoned model predicts as the {target label}.

\myparatight{Poison-forensics settings}
For each dataset we randomly choose a misclassified \emph{target input}. \alg also requires a non-target input. Because the server may lack real data, it synthesizes a \emph{non‑target input}: image pixels are sampled i.i.d. from $\mathrm{U}(0,1)$, and text is a random tweet of the same length. Results in Appendix~\ref{sec:append_otherexp} also show that using a true input instead can slightly improves \alg. By default, the server saves the global model and clients' updates every 10 rounds. Each update $g_t^{(i)}$ is $\ell_2$‑normalized before computing influence scores in Equations~\ref{influenceframework} and~\ref{influencescoreFL-two} to offset scale differences. Unless stated otherwise, we report results on CIFAR‑10 under Scaling attack. {All the experiments are finished on one single Quadro RTX 6000 GPU with 24GB memory.}

\subsection{Compared methods}
We compare \alg with the following methods including variants of \algns. 

{\bf Poison Forensics (PF)~\cite{shan2022poison}.}
PF is designed for centralized learning. To apply it in FL, we assume the server can access clients' local data. PF identifies poisoned training examples, and a client is classified as malicious if its fraction of detected poisoned samples exceeds the average across clients.

{\bf \algns-G (GAS~\cite{hammoudeh2022identifying} + \alg).}
GAS computes influence scores for training examples in centralized learning. However, it lacks a detection step. We extend it to FL (with access to local data) and combine it with \alg as an end-to-end method. Specifically, we compute influence scores using GAS and then detect poisoned examples using HDBSCAN (details in Appendix~\ref{appendix_adapt_GAS}). Similar to PF, clients are marked as malicious based on their fraction of detected poisoned examples.

{\bf \algns-A.}
This is a variant of \alg that uses only a target input $x$. The server computes influence scores $s_i$ for each client (Equation~\ref{influenceframework}), clusters clients with HDBSCAN, and treats clusters with positive average scores as malicious. We include this variant to highlight the limitations of using only a target input.

\subsection{Experimental Results}\label{sec:exp}

\myparatight{Training-phase defenses are insufficient} 
Training‑phase defenses rely on robust aggregation or attacker detection to prevent poisoning attacks. However, Table~\ref{table_tacc_asr} in Appendix shows robust FL methods, such as Trim, Median~\cite{yin2018byzantine}, FLTrust~\cite{cao2020fltrust}, and FLAME~\cite{nguyen2022flame}, still leave high attack‑success rates. Table~\ref{FLDector_results} further shows that FLDetector often misses malicious clients. Non‑iid data blur the line between benign and malicious updates, so these defenses remain vulnerable.

\begin{table}[!t]
\centering
\vspace{-2mm}
\fontsize{7}{9}\selectfont
\renewcommand{\arraystretch}{1.1}
\caption{DACC/FPR/FNR of FLDetector, a training-phase method to detect malicious clients.}
\label{FLDector_results}
\begin{tabular}{lccccc}
\toprule
\multirow{1}{*}{Attack} & MNIST & Fashion-MNIST & CIFAR-10 & Sentiment140 & ImageNet-Fruits \\
\midrule
\multirow{1}{*}{Scaling}  & 0.960/0.038/0.050 & 0.870/0.138/0.100 & 0.400/0.538/0.850 & 0.020/0.975/1.000 & 0.475/0.438/0.875 \\
\midrule
\multirow{1}{*}{ALIE} & 0.000/1.000/1.000 & 0.000/1.000/1.000 & 0.000/1.000/1.000 & 0.010/0.988/1.000 & 0.075/0.906/1.000 \\
\midrule
\multirow{1}{*}{Edge} & 0.160/0.800/1.000 & 0.390/0.563/0.800 & 0.160/0.800/1.000 & 0.080/0.900/1.000 & 0.750/0.281/0.250 \\
\bottomrule
\end{tabular}
\end{table}

\begin{table*}[!t]\renewcommand{\arraystretch}{1}
\centering
\fontsize{6.8}{9}\selectfont
\renewcommand{\arraystretch}{1.1}
\caption{Results of \alg and compared poison-forensics methods.}
\label{detection_fl}
\begin{tabular}{llccccc}
\toprule
\multirow{2}{*}{Attack} & \multirow{2}{*}{Method} & \multicolumn{5}{c}{Dataset (DACC/FPR/FNR)} \\
\cmidrule(lr){3-7}
 & & MNIST & Fashion-MNIST & CIFAR-10 & Sentiment140 & ImageNet-Fruits \\
\midrule
\multirow{4}{*}{Scaling} 
 & PF        & 0.900/0.125/0.000   & 0.900/0.125/0.000   & 0.900/0.125/0.000   & \textbf{0.990/0.013/0.000}   & 0.600/0.375/0.500 \\
 & \algns-G  & 0.480/0.413/0.950   & 0.740/0.100/0.900   & 0.900/0.125/0.000   & \textbf{0.990/0.013/0.000}   & 0.600/0.313/0.750 \\
 & \algns-A  & 0.900/0.125/0.000   & \textbf{1.000/0.000/0.000}   & 0.900/0.125/0.000   & 0.900/0.125/0.000   & 0.900/0.125/0.000 \\
 & \alg      & \textbf{1.000/0.000/0.000}   & \textbf{1.000/0.000/0.000}   & \textbf{1.000/0.000/0.000}   & 0.980/0.025/0.000   & \textbf{1.000/0.000/0.000} \\
\midrule
\multirow{4}{*}{ALIE} 
 & PF        & 0.740/0.325/0.000   & 0.900/0.125/0.000   & 0.900/0.125/0.000   & 0.760/0.275/0.100   & \textbf{1.000/0.000/0.000} \\
 & \algns-G  & 0.520/0.363/0.950   & 0.740/0.100/0.900   & 0.900/0.125/0.000   & 0.980/0.025/0.000   & 0.525/0.406/0.750 \\
 & \algns-A  & 0.900/0.125/0.000   & \textbf{1.000/0.000/0.000}   & 0.900/0.125/0.000   & 0.900/0.125/0.000   & 0.875/0.156/0.000 \\
 & \alg      & \textbf{1.000/0.000/0.000}   & \textbf{1.000/0.000/0.000}   & \textbf{1.000/0.000/0.000}   & \textbf{1.000/0.000/0.000}   & \textbf{1.000/0.000/0.000} \\
\midrule
\multirow{4}{*}{Edge} 
 & PF        & 0.920/0.100/0.000   & \textbf{1.000/0.000/0.000}   & 0.820/0.225/0.000   & \textbf{0.970/0.038/0.000}   & 0.850/0.188/0.000 \\
 & \algns-G  & 0.930/0.088/0.000   & 0.920/0.100/0.000   & 0.920/0.100/0.000   & 0.940/0.075/0.000   & 0.700/0.125/1.000 \\
 & \algns-A  & 0.920/0.100/0.000   & 0.920/0.100/0.000   & 0.910/0.113/0.000   & 0.920/0.100/0.000   & 0.800/0.125/0.500 \\
 & \alg      & \textbf{1.000/0.000/0.000}   & \textbf{1.000/0.000/0.000}   & \textbf{0.980/0.000/0.100}   & \textbf{0.970/0.038/0.000}   & \textbf{0.950/0.031/0.125} \\
\bottomrule
\end{tabular}
\end{table*}

\myparatight{\alg is effective and outperforms baselines} Table~\ref{detection_fl} shows the results of \alg and compared poison-forensics methods. \alg accurately traces attackers across all datasets and attacks: its DACC is always 1 or close to 1, while both FPR and FNR remain at 0 or below 3\% in only a few Scaling and Edge attack cases. Furthermore, \alg outperforms all compared forensics methods. \algns‑A, which relies solely on the target input, mislabels $>$10\% of benign clients on CIFAR‑10. \textbf{This shows that using only target input ${x}$ cannot effectively distinguish between malicious and Category I benign clients.} PF and \algns‑G, even when given the unfair advantage of direct access to local data, still trail behind \alg. When using clients' updates, as shown in Table~\ref{detection_fl_rebuttal} in Appendix, \algns-G performs worse, while PF achieves nearly the same accuracy it attains with direct access to local data.

\myparatight{Other experiments} We conduct additional studies to evaluate the robustness and versatility of \alg. Details are shown in Appendix~\ref{sec:append_otherexp}.  
First, we compare using a random input versus a true target-class input as the non-target input in \alg; results are generally comparable, with slightly lower FPRs when using a true input. Second, we apply \alg to clean-label attacks, where poisoned samples retain their original labels. Even in this setting, \alg performs well (e.g., DACC=0.95, FPR=0.06, FNR=0.0 on CIFAR-10). Lastly, we show that \alg can be adapted to centralized learning by treating each training example as a pseudo-client. On imbalanced datasets, it outperforms existing poison-forensics baselines, similar to the FL case with non-iid data.

\subsection{Ablation Studies}
This section presents several ablation studies for \alg, including the impact of (i) the fraction of malicious clients, (ii) degree of non‑IID data, and (iii) aggregation rules. Appendix~\ref{sec:append_ablation} further shows the impact of the number of check points, scaling factor, and fraction of selected clients.

\myparatight{Impact of fraction of malicious clients} Figure~\ref{fig:abl_malicious_frac} shows that \alg works well even when a large fraction of clients (e.g., 40\%) are malicious. Specifically, \alg achieves the FNR$\leq$5\% and FPR=0 for attacker fractions varies from 10\% to 40\%. We consider at least 10\% of malicious clients because the attacks themselves become ineffective~\cite{fang2020local} when the fraction is small.

\myparatight{Impact of degree of non-iid} Based on Figure~\ref{fig:abl_noniid}, \alg works well across different degrees of non-iid. In particular, \alg achieves 0 FPR and at most 15\% FNR, confirming its design for non‑IID data. Note that 0.1 degree of non-iid represents the iid setting. 

\myparatight{Impact of aggregation rule}
Figure~\ref{fig:abl_agg_rules} shows that \alg still identifies malicious clients when the server uses Byzantine‑robust aggregation rules. Although these rules alone cannot stop the attacks (as shown in Table~\ref{table_tacc_asr}), pairing them with \alg provides strong defense. \alg achieves slightly higher FPR under FLAME because Scaling attack is less effective for FLAME, causing a few Category I clients to be misclassified as malicious.

\begin{figure*}[!t]
  \centering
  \begin{adjustbox}{valign=t}
    \subfloat[Fraction of malicious clients\label{fig:abl_malicious_frac}]{
      \includegraphics[width=.3\textwidth]{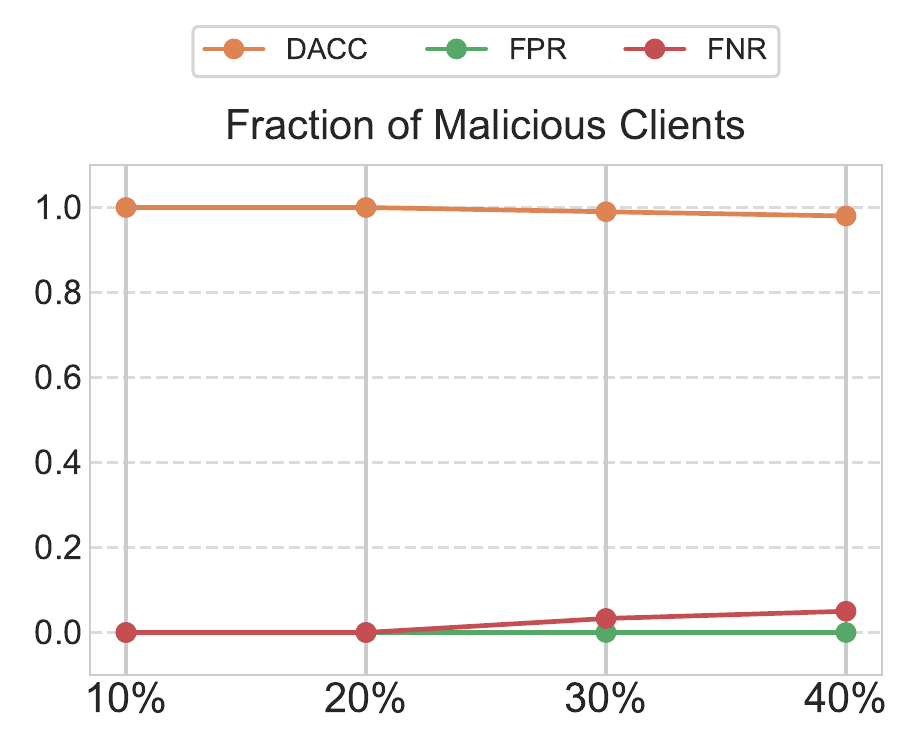}}
    \hfill
    \subfloat[Degree of non‑iid\label{fig:abl_noniid}]{
      \includegraphics[width=.3\textwidth]{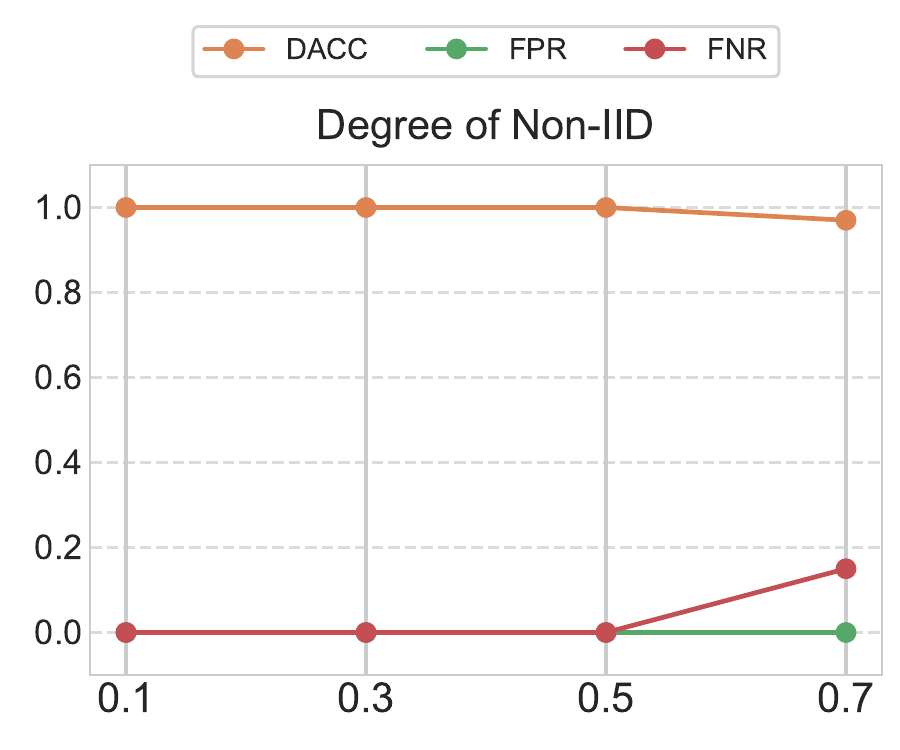}}
    \hfill
    \subfloat[Aggregation rules\label{fig:abl_agg_rules}]{
      \includegraphics[width=.3\textwidth]{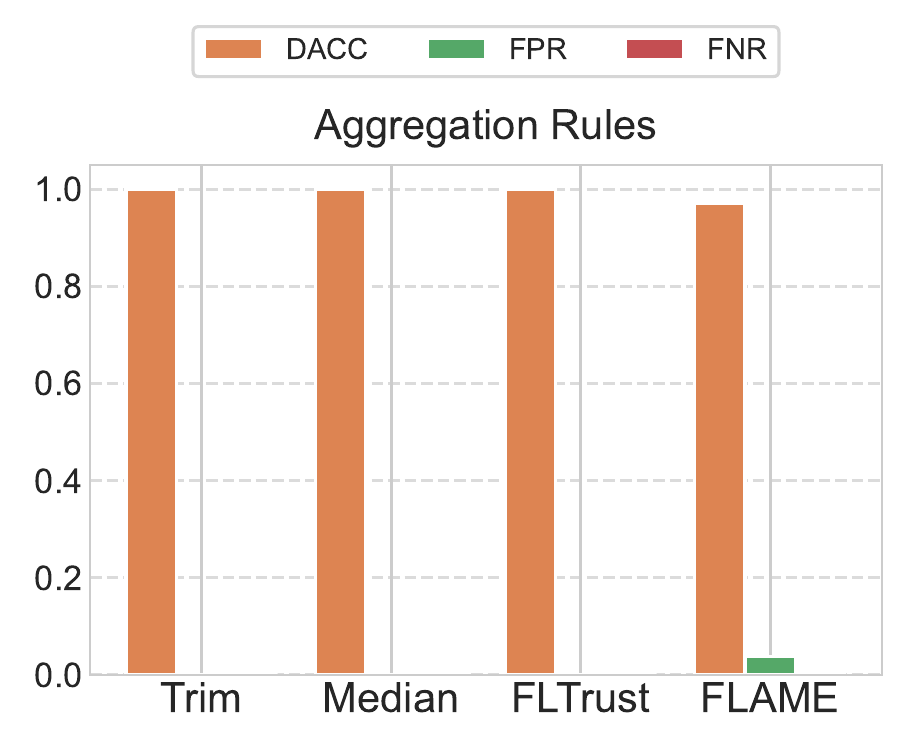}}
  \end{adjustbox}

  \caption{Ablation‑study results for \alg. Figure~\ref{fig:abl_append} in the Appendix shows additional studies (e.g., check points, client fraction, and scaling factor).}
  \label{fig:abl_main}
\end{figure*}

\begin{figure*}[!t]
  \centering
  \includegraphics[width=.3\textwidth]{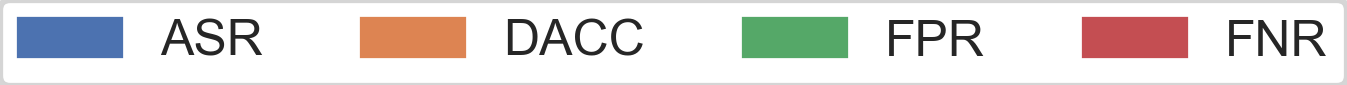}
  \begin{adjustbox}{valign=t} 
    \begin{tabular}{ccc}
      \multicolumn{3}{c}{
        \includegraphics[width=.3\textwidth]{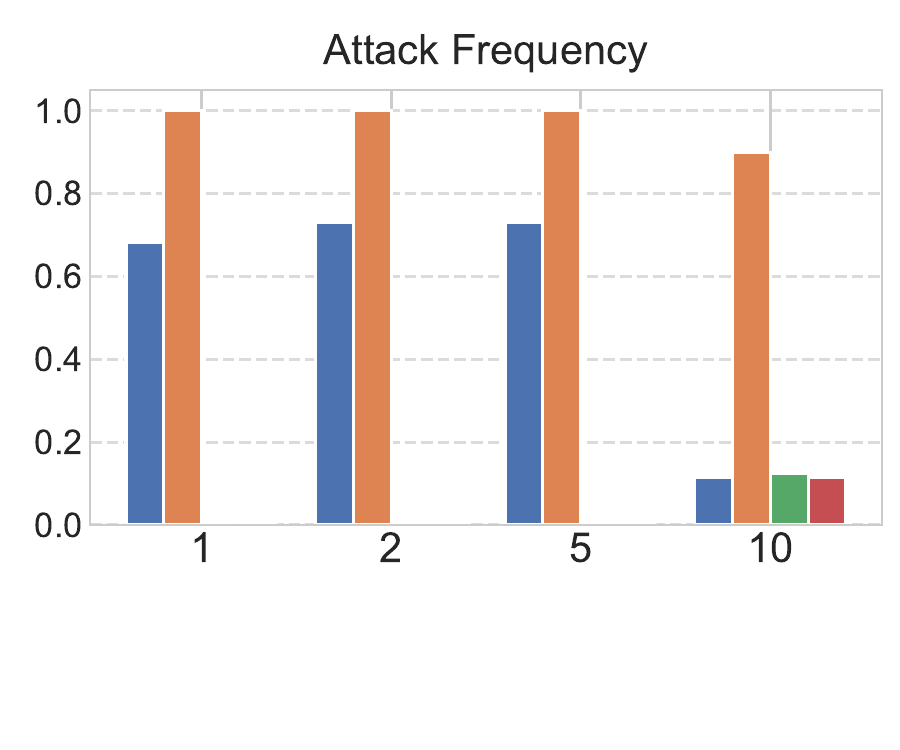}
        \hspace{0.1\textwidth}  
        \includegraphics[width=.3\textwidth]{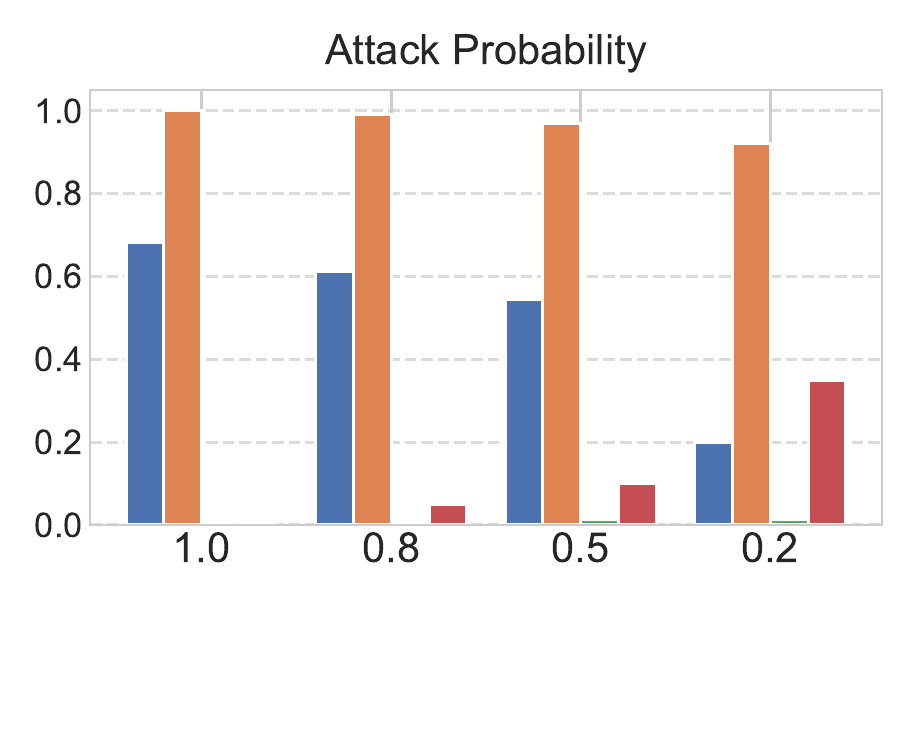}
      }\\[-2.7em]
      \includegraphics[width=.3\textwidth]{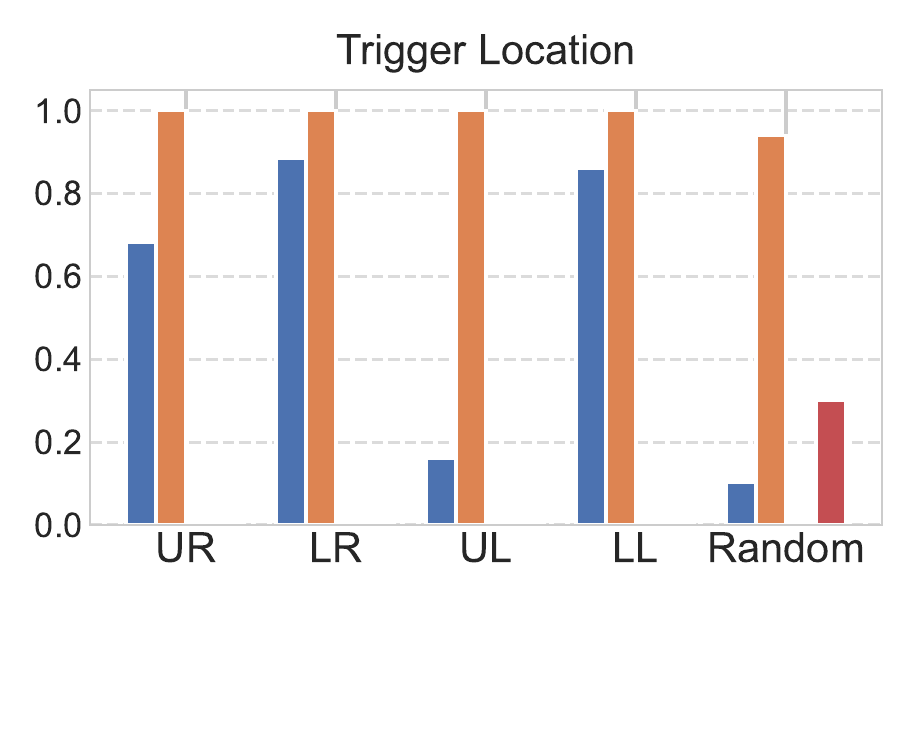} &
      \includegraphics[width=.3\textwidth]{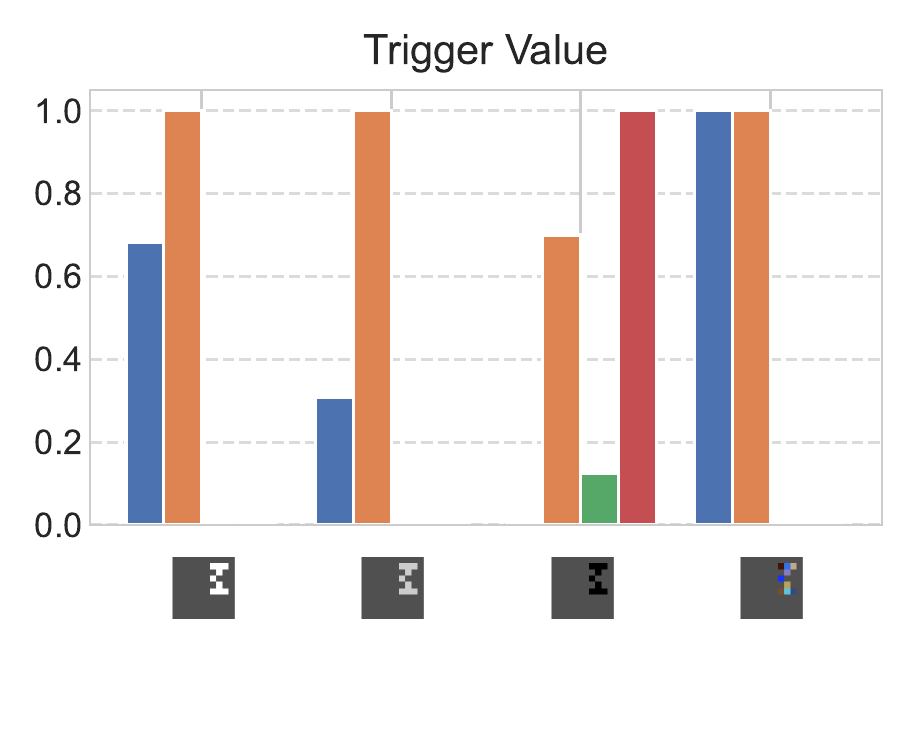} &
      \includegraphics[width=.3\textwidth]{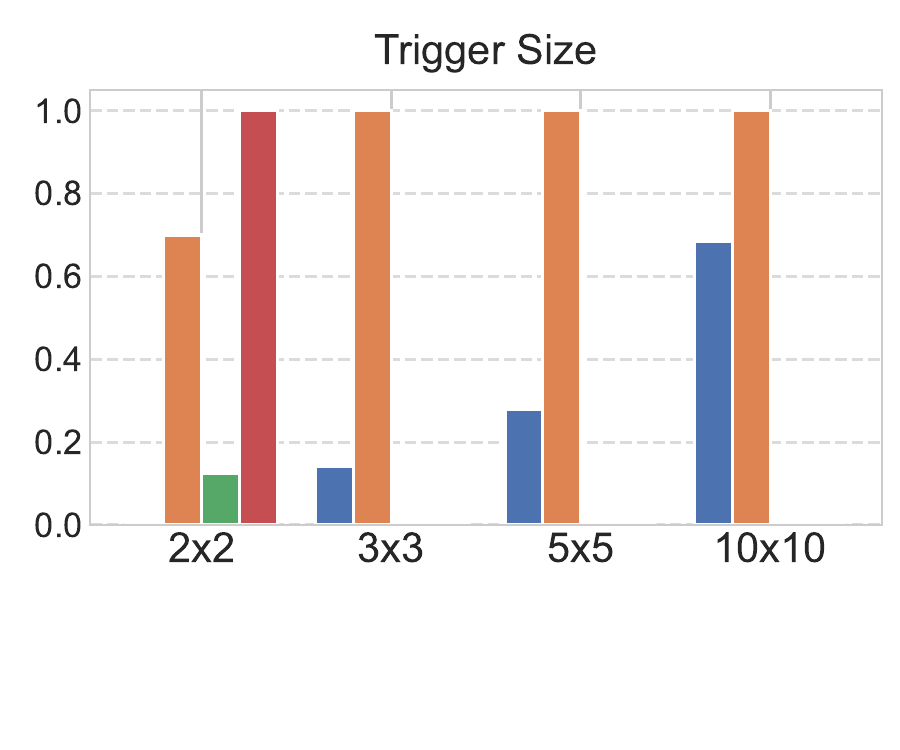}   \\[-1.7em]
    \end{tabular}
  \end{adjustbox}
  \caption{Results of \alg for adaptive attacks.}
  \label{fig:attack-adaptive}
\end{figure*}

\section{Adaptive Attacks}
Our theoretical analysis in Appendix~\ref{app:sec_ana} proves \alg resilient to any (adaptive) poisoning attack that meets our formal definition. Attacks that violate this definition can hurt \alg{}’s detection but usually decrease their own ASR. We thus test several such adaptive variants under default settings, e.g., CIFAR‑10, 150 check points, and all clients participate in each round.

\myparatight{Attack frequency and probability}
To evade detection, malicious clients may attack only intermittently. First, they can strike every $e$ training rounds. As shown in the subfigures on the first row of Figure~\ref{fig:attack-adaptive}, when $e\le 5$, attacks remain effective (high ASR) and \alg is still effective; once $e \ge 10$, ASR falls below 12\% and \alg{}’s performance drops because the threat itself is weak. Second, clients may attack each round with probability $p$. As shown in Figure~\ref{fig:attack-adaptive}, when $p$ declines, ASR and \alg{}’s recall both decrease, but \alg stays effective for $p\ge0.5$ with a FPR less than 1.25\%. Even at $p=0.2$, while the ASR is 20\%, \alg misses some malicious clients but misclassifies at most one benign client. This means if an attacker aims to evade \alg, its attack becomes less or not effective. 

\myparatight{Trigger location, value, and size}
Subfigures on the second row of Figure~\ref{fig:attack-adaptive} evaluate \alg under the Scaling attack as we vary the trigger’s \emph{location}, \emph{value}, and \emph{size}.
Attacks with triggers placed at fixed locations—UR, LR, UL, and LL—achieve high ASRs, where these denote the upper right, lower right, upper left, and lower left corners of the image, respectively. The trigger in these experiments is the same as in~\cite{bagdasaryan2020backdoor}. In these cases, \alg identifies all malicious clients. When the trigger is at random locations, the attack is less effective, leading to some missed detections (FNR around 30\%). Furthermore, altering the RGB value shows a similar pattern: once the attack is effective—e.g., white $(255,255,255)$ or gray $(204,204,204)$ squares—\alg traces every malicious client, whereas an all‑black trigger, which fails to poison the model (ASR=0), leaves them undetected. Finally, we find that increasing trigger size boosts ASR: a 2$\times$2 square is ineffective and hampers detection, but sizes larger than 3$\times$3 already let \alg catch every malicious client even when ASR is only 14.2\%.
Overall, \alg succeeds whenever the trigger is large, distinctly colored, or consistently placed enough to mount a meaningful attack, and it degrades only when the attack itself has little impact.

\section{Discussion}\label{sec:discussion}
\myparatight{Recovering from attacks after \algns}  
After \alg detects malicious clients, the server can discard their updates and re‑train the global model. On CIFAR‑10 under Edge attack, the case with highest FNR, test accuracy improves from 81.9\% to 82.8\%, while ASR drops from 17.4\% to 5.6\%. This recovery can be made communication‑efficient with methods like FedRecover~\cite{cao2023fedrecover}.

\myparatight{Detecting misclassified target input} 
\alg assumes the given misclassified sample is a target input. If it is not, the real attackers may not contribute to it and thus escape detection. We adapt \alg to first decide whether a misclassified sample is a target input. For a misclassified input $x$ and a {non‑target} input $x'$, we compute each client’s influence scores $(s_i,s_i')$ and run \alg to form \emph{potential malicious} clusters $c_{p_j}$. Our intuition is that if all potential malicious clusters have nearly identical mean $s_i$ and $s_i'$, making $x$ and $x'$ indistinguishable in influence, we label the misclassified input $x$ as non‑target. If every such cluster satisfies
$c_{p_j}$ satisfies $\alpha\leq\sum_{i\in c_{p_j}}s_i^{\prime}/\sum_{i\in c_{p_j}}s_i\leq\frac{1}{\alpha}$ for some $\alpha<1$, we judge $x$ to be a non‑target input; otherwise, we treat it as a target input. With $\alpha = 0.2$, we evaluate the method on 50 randomly chosen target samples and 50 misclassified non‑target samples. It correctly labeled 96\% of the target inputs and 98\% of the non‑target inputs.

\section{Conclusion and Future Work}
\label{sec:conclusion}
\vspace{-2mm}
In this work, we propose \algns, the first poison-forensics method to trace back  malicious clients in FL. We theoretically show the security of \alg against (adaptive) poisoning attacks under a formal definition of poisoning attack. Moreover, our empirical evaluation results on multiple benchmark datasets show that  \alg can accurately trace back  malicious clients against  both state-of-the-art and adaptive poisoning attacks. An interesting future work is to extend \alg to  untargeted poisoning attacks and explore the security of \alg against strategically crafted misclassified target input.  

\section*{Acknowledgement}
We thank the anonymous reviewers for their constructive comments. This work was supported by NSF under grant no. 2131859, 2125977, 2112562, and 1937787.

\bibliographystyle{plain}
\bibliography{refs}

\clearpage
\section*{NeurIPS Paper Checklist}

\begin{enumerate}

\item {\bf Claims}
    \item[] Question: Do the main claims made in the abstract and introduction accurately reflect the paper's contributions and scope?
    \item[] Answer: \answerYes{} 
    \item[] Justification: The abstract and introduction clearly summarize the proposed method and the paper's contributions, which are well supported by the technical content. 
    \item[] Guidelines:
    \begin{itemize}
        \item The answer NA means that the abstract and introduction do not include the claims made in the paper.
        \item The abstract and/or introduction should clearly state the claims made, including the contributions made in the paper and important assumptions and limitations. A No or NA answer to this question will not be perceived well by the reviewers. 
        \item The claims made should match theoretical and experimental results, and reflect how much the results can be expected to generalize to other settings. 
        \item It is fine to include aspirational goals as motivation as long as it is clear that these goals are not attained by the paper. 
    \end{itemize}

\item {\bf Limitations}
    \item[] Question: Does the paper discuss the limitations of the work performed by the authors?
    \item[] Answer: \answerYes{} 
    \item[] Justification: We discuss the limitations of the work in Appendix~\ref{appendix:untargeted}.
    \item[] Guidelines:
    \begin{itemize}
        \item The answer NA means that the paper has no limitation while the answer No means that the paper has limitations, but those are not discussed in the paper. 
        \item The authors are encouraged to create a separate "Limitations" section in their paper.
        \item The paper should point out any strong assumptions and how robust the results are to violations of these assumptions (e.g., independence assumptions, noiseless settings, model well-specification, asymptotic approximations only holding locally). The authors should reflect on how these assumptions might be violated in practice and what the implications would be.
        \item The authors should reflect on the scope of the claims made, e.g., if the approach was only tested on a few datasets or with a few runs. In general, empirical results often depend on implicit assumptions, which should be articulated.
        \item The authors should reflect on the factors that influence the performance of the approach. For example, a facial recognition algorithm may perform poorly when image resolution is low or images are taken in low lighting. Or a speech-to-text system might not be used reliably to provide closed captions for online lectures because it fails to handle technical jargon.
        \item The authors should discuss the computational efficiency of the proposed algorithms and how they scale with dataset size.
        \item If applicable, the authors should discuss possible limitations of their approach to address problems of privacy and fairness.
        \item While the authors might fear that complete honesty about limitations might be used by reviewers as grounds for rejection, a worse outcome might be that reviewers discover limitations that aren't acknowledged in the paper. The authors should use their best judgment and recognize that individual actions in favor of transparency play an important role in developing norms that preserve the integrity of the community. Reviewers will be specifically instructed to not penalize honesty concerning limitations.
    \end{itemize}

\item {\bf Theory assumptions and proofs}
    \item[] Question: For each theoretical result, does the paper provide the full set of assumptions and a complete (and correct) proof?
    \item[] Answer: \answerYes{} 
    \item[] Justification: All theoretical results are presented with clearly stated assumptions and proofs in Appendix~\ref{app:sec_ana}.
    \item[] Guidelines:
    \begin{itemize}
        \item The answer NA means that the paper does not include theoretical results. 
        \item All the theorems, formulas, and proofs in the paper should be numbered and cross-referenced.
        \item All assumptions should be clearly stated or referenced in the statement of any theorems.
        \item The proofs can either appear in the main paper or the supplemental material, but if they appear in the supplemental material, the authors are encouraged to provide a short proof sketch to provide intuition. 
        \item Inversely, any informal proof provided in the core of the paper should be complemented by formal proofs provided in appendix or supplemental material.
        \item Theorems and Lemmas that the proof relies upon should be properly referenced. 
    \end{itemize}

    \item {\bf Experimental result reproducibility}
    \item[] Question: Does the paper fully disclose all the information needed to reproduce the main experimental results of the paper to the extent that it affects the main claims and/or conclusions of the paper (regardless of whether the code and data are provided or not)?
    \item[] Answer: \answerYes{} 
    \item[] Justification: We provide full experimental details in Section~\ref{sec:experment} and Appendix~\ref{sec:append_dataset}–\ref{sec:append_otherexp}, including datasets, model architectures, and hyperparameters.
    \item[] Guidelines:
    \begin{itemize}
        \item The answer NA means that the paper does not include experiments.
        \item If the paper includes experiments, a No answer to this question will not be perceived well by the reviewers: Making the paper reproducible is important, regardless of whether the code and data are provided or not.
        \item If the contribution is a dataset and/or model, the authors should describe the steps taken to make their results reproducible or verifiable. 
        \item Depending on the contribution, reproducibility can be accomplished in various ways. For example, if the contribution is a novel architecture, describing the architecture fully might suffice, or if the contribution is a specific model and empirical evaluation, it may be necessary to either make it possible for others to replicate the model with the same dataset, or provide access to the model. In general. releasing code and data is often one good way to accomplish this, but reproducibility can also be provided via detailed instructions for how to replicate the results, access to a hosted model (e.g., in the case of a large language model), releasing of a model checkpoint, or other means that are appropriate to the research performed.
        \item While NeurIPS does not require releasing code, the conference does require all submissions to provide some reasonable avenue for reproducibility, which may depend on the nature of the contribution. For example
        \begin{enumerate}
            \item If the contribution is primarily a new algorithm, the paper should make it clear how to reproduce that algorithm.
            \item If the contribution is primarily a new model architecture, the paper should describe the architecture clearly and fully.
            \item If the contribution is a new model (e.g., a large language model), then there should either be a way to access this model for reproducing the results or a way to reproduce the model (e.g., with an open-source dataset or instructions for how to construct the dataset).
            \item We recognize that reproducibility may be tricky in some cases, in which case authors are welcome to describe the particular way they provide for reproducibility. In the case of closed-source models, it may be that access to the model is limited in some way (e.g., to registered users), but it should be possible for other researchers to have some path to reproducing or verifying the results.
        \end{enumerate}
    \end{itemize}

\item {\bf Open access to data and code}
    \item[] Question: Does the paper provide open access to the data and code, with sufficient instructions to faithfully reproduce the main experimental results, as described in supplemental material?
    \item[] Answer: \answerNo{} 
    \item[] Justification: All datasets are public, and we will release our code and implementation upon publication to support reproducibility.
    \item[] Guidelines:
    \begin{itemize}
        \item The answer NA means that paper does not include experiments requiring code.
        \item Please see the NeurIPS code and data submission guidelines (\url{https://nips.cc/public/guides/CodeSubmissionPolicy}) for more details.
        \item While we encourage the release of code and data, we understand that this might not be possible, so “No” is an acceptable answer. Papers cannot be rejected simply for not including code, unless this is central to the contribution (e.g., for a new open-source benchmark).
        \item The instructions should contain the exact command and environment needed to run to reproduce the results. See the NeurIPS code and data submission guidelines (\url{https://nips.cc/public/guides/CodeSubmissionPolicy}) for more details.
        \item The authors should provide instructions on data access and preparation, including how to access the raw data, preprocessed data, intermediate data, and generated data, etc.
        \item The authors should provide scripts to reproduce all experimental results for the new proposed method and baselines. If only a subset of experiments are reproducible, they should state which ones are omitted from the script and why.
        \item At submission time, to preserve anonymity, the authors should release anonymized versions (if applicable).
        \item Providing as much information as possible in supplemental material (appended to the paper) is recommended, but including URLs to data and code is permitted.
    \end{itemize}

\item {\bf Experimental setting/details}
    \item[] Question: Does the paper specify all the training and test details (e.g., data splits, hyperparameters, how they were chosen, type of optimizer, etc.) necessary to understand the results?
    \item[] Answer: \answerYes{} 
    \item[] Justification: The paper includes comprehensive details on training/testing setups, model architectures, attack methods, and FL parameters (Section~\ref{sec:experment} and Appendix~\ref{sec:append_dataset}–\ref{exp:noniid_setting}).
    \item[] Guidelines:
    \begin{itemize}
        \item The answer NA means that the paper does not include experiments.
        \item The experimental setting should be presented in the core of the paper to a level of detail that is necessary to appreciate the results and make sense of them.
        \item The full details can be provided either with the code, in appendix, or as supplemental material.
    \end{itemize}

\item {\bf Experiment statistical significance}
    \item[] Question: Does the paper report error bars suitably and correctly defined or other appropriate information about the statistical significance of the experiments?
    \item[] Answer: \answerYes{} 
    \item[] Justification:  While we do not report formal error bars, we report results across five datasets, three types of poisoning attacks, and ablation studies across key factors, demonstrating consistent trends that validate statistical robustness.
    \item[] Guidelines:
    \begin{itemize}
        \item The answer NA means that the paper does not include experiments.
        \item The authors should answer "Yes" if the results are accompanied by error bars, confidence intervals, or statistical significance tests, at least for the experiments that support the main claims of the paper.
        \item The factors of variability that the error bars are capturing should be clearly stated (for example, train/test split, initialization, random drawing of some parameter, or overall run with given experimental conditions).
        \item The method for calculating the error bars should be explained (closed form formula, call to a library function, bootstrap, etc.)
        \item The assumptions made should be given (e.g., Normally distributed errors).
        \item It should be clear whether the error bar is the standard deviation or the standard error of the mean.
        \item It is OK to report 1-sigma error bars, but one should state it. The authors should preferably report a 2-sigma error bar than state that they have a 96\% CI, if the hypothesis of Normality of errors is not verified.
        \item For asymmetric distributions, the authors should be careful not to show in tables or figures symmetric error bars that would yield results that are out of range (e.g. negative error rates).
        \item If error bars are reported in tables or plots, The authors should explain in the text how they were calculated and reference the corresponding figures or tables in the text.
    \end{itemize}

\item {\bf Experiments compute resources}
    \item[] Question: For each experiment, does the paper provide sufficient information on the computer resources (type of compute workers, memory, time of execution) needed to reproduce the experiments?
    \item[] Answer: \answerYes{} 
    \item[] Justification: We provide these information in Section~\ref{sec:experment}. The experiments can be run on one single Quadro RTX 6000 GPU with 24GB memory and don't require more compute than we reported in the paper.
    \item[] Guidelines:
    \begin{itemize}
        \item The answer NA means that the paper does not include experiments.
        \item The paper should indicate the type of compute workers CPU or GPU, internal cluster, or cloud provider, including relevant memory and storage.
        \item The paper should provide the amount of compute required for each of the individual experimental runs as well as estimate the total compute. 
        \item The paper should disclose whether the full research project required more compute than the experiments reported in the paper (e.g., preliminary or failed experiments that didn't make it into the paper). 
    \end{itemize}
    
\item {\bf Code of ethics}
    \item[] Question: Does the research conducted in the paper conform, in every respect, with the NeurIPS Code of Ethics \url{https://neurips.cc/public/EthicsGuidelines}?
    \item[] Answer: \answerYes{} 
    \item[] Justification: Our study adheres to the NeurIPS Code of Ethics and does not involve human subjects or private user data.
    \item[] Guidelines:
    \begin{itemize}
        \item The answer NA means that the authors have not reviewed the NeurIPS Code of Ethics.
        \item If the authors answer No, they should explain the special circumstances that require a deviation from the Code of Ethics.
        \item The authors should make sure to preserve anonymity (e.g., if there is a special consideration due to laws or regulations in their jurisdiction).
    \end{itemize}

\item {\bf Broader impacts}
    \item[] Question: Does the paper discuss both potential positive societal impacts and negative societal impacts of the work performed?
    \item[] Answer: \answerYes{} 
    \item[] Justification: We discuss the positive societal impacts of our work in Appendix~\ref{appendix:broader_impact}.
    \item[] Guidelines:
    \begin{itemize}
        \item The answer NA means that there is no societal impact of the work performed.
        \item If the authors answer NA or No, they should explain why their work has no societal impact or why the paper does not address societal impact.
        \item Examples of negative societal impacts include potential malicious or unintended uses (e.g., disinformation, generating fake profiles, surveillance), fairness considerations (e.g., deployment of technologies that could make decisions that unfairly impact specific groups), privacy considerations, and security considerations.
        \item The conference expects that many papers will be foundational research and not tied to particular applications, let alone deployments. However, if there is a direct path to any negative applications, the authors should point it out. For example, it is legitimate to point out that an improvement in the quality of generative models could be used to generate deepfakes for disinformation. On the other hand, it is not needed to point out that a generic algorithm for optimizing neural networks could enable people to train models that generate Deepfakes faster.
        \item The authors should consider possible harms that could arise when the technology is being used as intended and functioning correctly, harms that could arise when the technology is being used as intended but gives incorrect results, and harms following from (intentional or unintentional) misuse of the technology.
        \item If there are negative societal impacts, the authors could also discuss possible mitigation strategies (e.g., gated release of models, providing defenses in addition to attacks, mechanisms for monitoring misuse, mechanisms to monitor how a system learns from feedback over time, improving the efficiency and accessibility of ML).
    \end{itemize}
    
\item {\bf Safeguards}
    \item[] Question: Does the paper describe safeguards that have been put in place for responsible release of data or models that have a high risk for misuse (e.g., pretrained language models, image generators, or scraped datasets)?
    \item[] Answer: \answerNA{} 
    \item[] Justification: Our work does not release models or data posing high risk for misuse.
    \item[] Guidelines:
    \begin{itemize}
        \item The answer NA means that the paper poses no such risks.
        \item Released models that have a high risk for misuse or dual-use should be released with necessary safeguards to allow for controlled use of the model, for example by requiring that users adhere to usage guidelines or restrictions to access the model or implementing safety filters. 
        \item Datasets that have been scraped from the Internet could pose safety risks. The authors should describe how they avoided releasing unsafe images.
        \item We recognize that providing effective safeguards is challenging, and many papers do not require this, but we encourage authors to take this into account and make a best faith effort.
    \end{itemize}

\item {\bf Licenses for existing assets}
    \item[] Question: Are the creators or original owners of assets (e.g., code, data, models), used in the paper, properly credited and are the license and terms of use explicitly mentioned and properly respected?
    \item[] Answer: \answerYes{} 
    \item[] Justification: All used datasets and models (e.g., CIFAR-10) are cited with proper attribution and used under their respective public licenses (Appendix~\ref{sec:append_dataset}).
    \item[] Guidelines:
    \begin{itemize}
        \item The answer NA means that the paper does not use existing assets.
        \item The authors should cite the original paper that produced the code package or dataset.
        \item The authors should state which version of the asset is used and, if possible, include a URL.
        \item The name of the license (e.g., CC-BY 4.0) should be included for each asset.
        \item For scraped data from a particular source (e.g., website), the copyright and terms of service of that source should be provided.
        \item If assets are released, the license, copyright information, and terms of use in the package should be provided. For popular datasets, \url{paperswithcode.com/datasets} has curated licenses for some datasets. Their licensing guide can help determine the license of a dataset.
        \item For existing datasets that are re-packaged, both the original license and the license of the derived asset (if it has changed) should be provided.
        \item If this information is not available online, the authors are encouraged to reach out to the asset's creators.
    \end{itemize}

\item {\bf New assets}
    \item[] Question: Are new assets introduced in the paper well documented and is the documentation provided alongside the assets?
    \item[] Answer: \answerNA{} 
    \item[] Justification: We do not introduce any new dataset or model asset in this work.
    \item[] Guidelines:
    \begin{itemize}
        \item The answer NA means that the paper does not release new assets.
        \item Researchers should communicate the details of the dataset/code/model as part of their submissions via structured templates. This includes details about training, license, limitations, etc. 
        \item The paper should discuss whether and how consent was obtained from people whose asset is used.
        \item At submission time, remember to anonymize your assets (if applicable). You can either create an anonymized URL or include an anonymized zip file.
    \end{itemize}

\item {\bf Crowdsourcing and research with human subjects}
    \item[] Question: For crowdsourcing experiments and research with human subjects, does the paper include the full text of instructions given to participants and screenshots, if applicable, as well as details about compensation (if any)? 
    \item[] Answer: \answerNA{} 
    \item[] Justification: This work does not involve crowdsourcing or human subjects.
    \item[] Guidelines:
    \begin{itemize}
        \item The answer NA means that the paper does not involve crowdsourcing nor research with human subjects.
        \item Including this information in the supplemental material is fine, but if the main contribution of the paper involves human subjects, then as much detail as possible should be included in the main paper. 
        \item According to the NeurIPS Code of Ethics, workers involved in data collection, curation, or other labor should be paid at least the minimum wage in the country of the data collector. 
    \end{itemize}

\item {\bf Institutional review board (IRB) approvals or equivalent for research with human subjects}
    \item[] Question: Does the paper describe potential risks incurred by study participants, whether such risks were disclosed to the subjects, and whether Institutional Review Board (IRB) approvals (or an equivalent approval/review based on the requirements of your country or institution) were obtained?
    \item[] Answer: \answerNA{} 
    \item[] Justification: Our research does not involve human subjects and thus does not require IRB approval.
    \item[] Guidelines:
    \begin{itemize}
        \item The answer NA means that the paper does not involve crowdsourcing nor research with human subjects.
        \item Depending on the country in which research is conducted, IRB approval (or equivalent) may be required for any human subjects research. If you obtained IRB approval, you should clearly state this in the paper. 
        \item We recognize that the procedures for this may vary significantly between institutions and locations, and we expect authors to adhere to the NeurIPS Code of Ethics and the guidelines for their institution. 
        \item For initial submissions, do not include any information that would break anonymity (if applicable), such as the institution conducting the review.
    \end{itemize}

\item {\bf Declaration of LLM usage}
    \item[] Question: Does the paper describe the usage of LLMs if it is an important, original, or non-standard component of the core methods in this research? Note that if the LLM is used only for writing, editing, or formatting purposes and does not impact the core methodology, scientific rigorousness, or originality of the research, declaration is not required.
    \item[] Answer: \answerNA{} 
    \item[] Justification: This paper does not involve large language models as a core methodological component.
    \item[] Guidelines:
    \begin{itemize}
        \item The answer NA means that the core method development in this research does not involve LLMs as any important, original, or non-standard components.
        \item Please refer to our LLM policy (\url{https://neurips.cc/Conferences/2025/LLM}) for what should or should not be described.
    \end{itemize}

\end{enumerate}
\clearpage
\appendix

\section*{Appendix}

\section{Related Work}
\subsection{Details of Poisoning Attacks in FL}
\label{app:poison_details}

\myparatight{Trigger-embedded poisoning attacks (backdoor attacks)} These attacks treat any input embedded with a specific trigger as a target input. 
In the \textbf{Scaling attack}~\cite{bagdasaryan2020backdoor}, malicious clients duplicate local data, embed a trigger into these examples, relabel them to the target label, and amplify the resulting model update with a scaling factor $\lambda$ before sending it to the server. 

The \textbf{ALIE attack}~\cite{baruch2019little} follows a similar data manipulation strategy, but constructs adversarial updates by solving an optimization problem to maximize the malicious effect.

\myparatight{Triggerless poisoning attacks} These attacks target specific inputs without any trigger. For example, in the \textbf{Edge-case attack}~\cite{wang2020attack}, the attacker injects out-of-distribution samples (edge cases) into the local training set of malicious clients and labels them with the target label. These inputs become target inputs post-training due to label manipulation.

\subsection{Details of Training-phase Defenses}
\label{app:defense_details}

\myparatight{Robust aggregation} Byzantine-robust methods like \textbf{Trimmed Mean} and \textbf{Median}~\cite{yin2018byzantine} filter out extreme updates to resist outliers. \textbf{FLTrust}~\cite{cao2020fltrust} anchors updates to a trusted reference dataset. \textbf{FLAME}~\cite{nguyen2022flame} incorporates client reputation into aggregation to downweight suspect clients.

\myparatight{Provably robust FL} \textbf{FLCert}~\cite{cao2022flcert} trains multiple global models using subsets of clients, providing an ensemble-based lower bound on test accuracy even under strong attacks.

\myparatight{Client detection} \textbf{FLDetector}~\cite{zhang2022fldetector} tracks consistency in model updates over time to identify malicious clients. \textbf{FedRecover}~\cite{cao2023fedrecover} recovers a clean global model by filtering out detected malicious updates, avoiding the need to retrain from scratch.

\section{Theoretical Analysis}\label{app:sec_ana}
\alg builds on Observations I and II (Section~\ref{sec:stepII}). In this section, we provide theoretical justification for these observations under a formal definition of poisoning attacks and several mild assumptions. While these assumptions may not always hold in practice, we empirically validate the effectiveness of \alg in Section~\ref{sec:exp}.

\subsection{Setup and Assumptions}
We first formalize poisoning attacks in FL. Malicious clients aim to make the global model predict the target label ${y}$ on any target input ${x}$, minimizing $\ell_{CE}(x, y; w)$, while benign clients aim to maintain accuracy on non-target inputs $x'$. This leads to the following assumptions (see Appendix~\ref{theorem:defi_appendix} for formal definitions):

\begin{itemize}
    \item \textbf{Local Linearity}: Cross-entropy loss is approximately linear in a small neighborhood around the global model.
    \item \textbf{Behavioral Difference}: Malicious clients tend to decrease $\ell_{CE}(x, y; w)$ but not $\ell_{CE}(x', y; w)$, while benign clients do the opposite.
    \item \textbf{Label-rich Advantage}: Category I benign clients, who have more data with target label, are more likely to behave like malicious clients on target inputs than Category II clients.
\end{itemize}

\subsection{Formal Definitions and Assumptions}
\label{theorem:defi_appendix}
\begin{defi} [Poisoning Attack to FL]
\label{assumption_1_appendix}
In a poisoning attack,   malicious clients aim to poison the global model $w$ such that it predicts target label ${y}$ for any target input ${x}$, i.e., the loss $\ell_{CE}({x}, {y}; w)$ is small; and benign clients aim to learn the global model such that it is accurate for non-target true inputs, i.e., the loss $\ell_{CE}(x', {y}; w)$ is small. 
Therefore, in a training round,  a malicious client's model update does not increase the loss $\ell_{CE}({x}, {y}; w)$ for a target input, while a benign client's model update does not decrease such loss. On the contrary,  a malicious client's model update does not decrease the loss $\ell_{CE}(x', {y}; w)$, while a benign client's model update does not increase such loss. Formally,  for any check-point training round $t \in \Omega$, malicious client $i$, and benign client $j$, we have the following assumptions to characterize the training process:
\begin{align}
\label{assumption_1_fl}
\ell_{CE}(x', {y}; w_t+g_t^{(j)}) \le \ell_{CE}(x', {y}; w_t+g_t^{(i)}), \\
\label{assumption_2_fl}
\ell_{CE}({x}, {y}; w_t+g_t^{(j)}) \ge \ell_{CE}({x}, {y}; w_t+g_t^{(i)}), 
\end{align}
where $w_t+g_t^{(i)}$ and $w_t+g_t^{(j)}$ respectively are the global models after training round $t$ if only the model updates of clients $i$ and $j$ were used to update the global model. 
\end{defi}

\begin{assumption} [Local Linearity]
    We assume the cross-entropy losses $\ell_{CE}(x^{\prime}, {y}; w_t)$ and $\ell_{CE}({x}, {y}; w_t)$ are locally linear in the region around $w_t$. In particular, based on first-order Taylor expansion, we have the following:
\begin{align}
 \ell_{CE}(x^{\prime}, {y}; w_t+\delta) 
&= \ell_{CE}(x^{\prime}, {y}; w_t) +  
 \nabla \ell_{CE}(x^{\prime}, {y}; w_t)^{\top} \delta, \nonumber \\
\ell_{CE}({x}, {y}; w_t+\delta) 
&= \ell_{CE}({x}, {y}; w_t) + 
  \nabla \ell_{CE}({x}, {y}; w_t)^{\top} \delta. \nonumber
\end{align}
\end{assumption}
We note that the local linearity assumption was also used in the machine learning community~\cite{boyd2004convex,xiong2022dpcd}.

\begin{assumption} [Label-rich Advantage]
    \label{assumption_prob_appendix}
     According to the definitions of Category I and Category II benign clients, a Category I benign client possesses a larger fraction of local training examples with the target label ${y}$ compared to a Category II benign client. Therefore, given any target input ${x}$ with target label ${y}$, we assume that the local model of a Category I benign client is more likely to predict  ${x}$ as ${y}$  than that of a Category II benign client. Formally, for any check-point training round $t \in \Omega$, Category I benign client $j_1$, and Category II benign client $j_2$, we make the following assumption:
    \begin{align}
    \label{equ:ass_prob}
    \ell_{CE}({x}, {y}; w_t+g_t^{(j_1)})\leq \ell_{CE}({x}, {y}; w_t+g_t^{(j_2)}),
    \end{align}
    where $w_t+g_t^{(j_1)}$ is the local model of Category I benign client $j_1$ and $w_t+g_t^{(j_2)}$ is the local model of Category II benign client $j_2$ in the training round $t$. 
\end{assumption}

\subsection{Guarantee for Observation I}
We show that under these assumptions, malicious clients and Category I benign clients have higher influence scores $s_i$ on target inputs than Category II benign clients. Theorem~\ref{remark_malicious_appendix} and~\ref{thm:cate1_appendix} together show that Observation I holds. 
\begin{thm}
    \label{remark_malicious_appendix}
Suppose the server picks all clients in each check-point training round, i.e., $C_t=\{1,2,\cdots,n\}$ for $t\in \Omega$, and \alg uses a true non-target input with target label ${y}$. Based on the poisoning attack definition and  Assumption~\ref{assumption_1_appendix}, we have that the influence score $s_i$ of a malicious client $i$ is no smaller than the influence score $s_j$ of a Category II benign client $j$. Concretely, we have $s_i\geq s_j$, where $s_{i}$ and $s_{j}$ are computed based on Equation~\ref{influenceframework}.
\end{thm}
\begin{proof}
By setting $\delta=g_t^{(i)}$ in Assumption~\ref{assumption_1_appendix}, we have the following for each check-point training round $t$:
\begin{align}
\ell_{CE}({x}, {y}; w_t+g_t^{(i)}) = \ell_{CE}({x}, {y}; w_t) + \nabla \ell_{CE}({x}, {y}; w_t)^{\top} g_t^{(i)}.
\label{first_part_second_fl}
\end{align}

Similarly, by setting $\delta=g_t^{(j)}$ in Assumption~\ref{assumption_1_appendix}, we have:
\begin{align}
\ell_{CE}({x}, {y}; w_t+g_t^{(j)}) = \ell_{CE}({x}, {y}; w_t)  + \nabla \ell_{CE}({x}, {y}; w_t)^{\top} g_t^{(j)}.
\label{second_part_second_fl}
\end{align}

By combining Equation~\ref{assumption_2_fl},~\ref{first_part_second_fl}, and~\ref{second_part_second_fl}, we have:
\begin{align}
\label{equa12}
    \nabla \ell_{CE}({x}, {y}; w_t)^{\top} g_t^{(i)} \leq \nabla \ell_{CE}({x}, {y}; w_t)^{\top} g_t^{(j)}.
\end{align}

 Since the learning rate $\alpha_t > 0$ and $C_t=\{1,2,\cdots,n\}$ in each check-point training round,  we have the following by summing over the check-point training rounds on both sides of Equation~\ref{equa12}:
\begin{align}
  \sum\limits_{t\in \Omega} \alpha_t  \nabla \ell_{CE}({x}, {y}; w_t)^{\top} g_t^{(i)} &\leq    \sum\limits_{t\in \Omega} \alpha_t \nabla \ell_{CE}({x}, {y}; w_t)^{\top} g_t^{(j)} \\
    &\Longleftrightarrow -s_i\leq -s_j.
    \label{proof_si_sj}
\end{align}

Therefore, we have $s_i\geq s_j$, which completes the proof. 
\end{proof}

\begin{thm}\label{thm:cate1_appendix}
Suppose the server picks all clients in each check-point training round, i.e., $C_t=\{1,2,\cdots,n\}$ for $t\in \Omega$, and \alg uses a true non-target input with target label ${y}$. Based on Assumption~\ref{assumption_1_appendix} and Assumption~\ref{assumption_prob_appendix}, we have that the influence score $s_{j_1}$ of a Category I benign client $j_1$ is no smaller  than the influence score $s_{j_2}$ of a Category II benign client $j_2$. Specifically, we have $s_{j_1}\geq s_{j_2}$, where $s_{j_1}$ and $s_{j_2}$ are computed based on Equation~\ref{influenceframework}.
\end{thm}
\begin{proof}
According to Assumption~\ref{assumption_prob_appendix}, we have:
\begin{equation}
    \label{equ:thm2_3}
    \begin{aligned}
        \ell_{CE}({x}, {y}; w_t+g_t^{(j_1)})\leq \ell_{CE}({x}, {y}; w_t+g_t^{(j_2)}).
    \end{aligned}
\end{equation}
By setting $\delta=g_t^{(j_1)}$ and $\delta=g_t^{(j_2)}$ in Assumption~\ref{assumption_1_appendix}, we can get:
\begin{align}
\label{equ:thm2_4}
\ell_{CE}({x}, {y}; w_t+g_t^{(j_1)}) 
&= \ell_{CE}({x}, {y}; w_t) + \nabla \ell_{CE}({x}, {y}; w_t)^{\top} g_t^{(j_1)}, \\
\ell_{CE}({x}, {y}; w_t+g_t^{(j_2)}) 
&= \ell_{CE}({x}, {y}; w_t) + \nabla \ell_{CE}({x}, {y}; w_t)^{\top} g_t^{(j_2)}.
\label{equ:thm2_5}
\end{align}
By combining Equation~\ref{equ:thm2_3},~\ref{equ:thm2_4}, and ~\ref{equ:thm2_5}, we have:
\begin{align}
\label{equ:thm2_6}
    \nabla \ell_{CE}({x}, {y}; w_t)^{\top} g_t^{(j_1)} \leq \nabla \ell_{CE}({x}, {y}; w_t)^{\top} g_t^{(j_2)}.
\end{align}
Since the learning rate $\alpha_t > 0$ and $C_t=\{1,2,\cdots,n\}$ in each check-point training round,  we have the following by summing over the check-point training rounds on both sides of Equation~\ref{equ:thm2_6}:
\begin{align}
  \sum\limits_{t\in \Omega} \alpha_t  \nabla \ell_{CE}({x}, {y}; w_t)^{\top} g_t^{(j_1)} &\leq    \sum\limits_{t\in \Omega} \alpha_t \nabla \ell_{CE}({x}, {y}; w_t)^{\top} g_t^{(j_2)} \\
    &\Longleftrightarrow -s_{j_1}\leq -s_{j_2},
    \label{equ:thm2_7}
\end{align}
which gives $s_{j_1}\geq s_{j_2}$ and  completes the proof.
\end{proof}

\subsection{Guarantee for Observation II}\label{appendix_theory_proofs_obs2}
We show that influence score gaps $s_i' - s_i$ are smaller for malicious clients than for benign ones.
\begin{thm}
\label{theorem_fl_appendix}
Suppose the server picks all clients in each check-point training round, i.e., $C_t=\{1,2,\cdots,n\}$ for $t\in \Omega$, and \alg uses a true non-target input with target label ${y}$. Based on the poisoning attack definition and  Assumption~\ref{assumption_1_appendix}, we have the influence score gap $s_i^{\prime}-s_i$ of a malicious client $i$ is no larger than the influence score gap $s_j^{\prime}-s_j$ of a benign client $j$. Formally, we have $s_i^{\prime} -s_i \le s_j^{\prime}-s_j$, where $s_i$ and $s_j$ are computed based on Equation~\ref{influenceframework}, while $s_i^{\prime}$ and $s_j^{\prime}$ are computed based on Equation~\ref{influencescoreFL-two}. 
\end{thm}

\begin{proof}
By setting $\delta=g_t^{(i)}$ in Assumption~\ref{assumption_1_appendix}, we have the following for each check-point training round $t$:
\begin{align}
\label{first_part_first_fl}
\ell_{CE}(x^{\prime}, {y}; w_t+g_t^{(i)}) = \ell_{CE}(x^{\prime}, {y}; w_t) + \nabla \ell_{CE}(x^{\prime}, {y}; w_t)^{\top} g_t^{(i)}.
\end{align}

Similarly, by setting $\delta=g_t^{(j)}$ in Assumption~\ref{assumption_1_appendix}, we have:
\begin{align}
\label{second_part_first_fl}
\ell_{CE}(x^{\prime}, {y}; w_t+g_t^{(j)}) = \ell_{CE}(x^{\prime}, {y}; w_t) + \nabla \ell_{CE}(x^{\prime}, {y}; w_t)^{\top} g_t^{(j)}.
\end{align}

By combining Equation~\ref{assumption_1_fl},~\ref{first_part_first_fl}, and~\ref{second_part_first_fl}, we have:
\begin{align}
\label{equa13}
    \nabla \ell_{CE}(x^{\prime}, {y}; w_t)^{\top} g_t^{(i)} \geq \nabla \ell_{CE}(x^{\prime}, {y}; w_t)^{\top} g_t^{(j)}.
\end{align}

 Since the learning rate $\alpha_t > 0$ and $C_t=\{1,2,\cdots,n\}$ in each check-point training round,  we have the following by summing over the check-point training rounds on both sides of Equation~\ref{equa13}:
\begin{align}
  \sum\limits_{t\in \Omega} \alpha_t  \nabla \ell_{CE}(x^{\prime}, {y}; w_t)^{\top} g_t^{(i)} &\ge    \sum\limits_{t\in \Omega} \alpha_t \nabla \ell_{CE}(x^{\prime}, {y}; w_t)^{\top} g_t^{(j)} \\
    &\Longleftrightarrow -s_i^{\prime}\geq -s_j^{\prime}.
    \label{proof_si_prime_sj_prime}
\end{align}

By combining Equations~\ref{proof_si_sj} and~\ref{proof_si_prime_sj_prime}, we have $s_i^{\prime} -s_i \le s_j^{\prime}-s_j$, which completes the proof. 
\end{proof}

\begin{coro}
\label{corollary_fl}
Given a malicious client $i$ and a benign client $j$, if the influence scores $s_i > 0$ and $s_j > 0$, then the influence score ratios satisfy: $\frac{s_i^{\prime}}{s_i } \le \frac{s_j^{\prime}}{s_j }$. 
\end{coro}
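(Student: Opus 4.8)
The plan is to prove the corollary directly from sign information about the non-target influence scores, rather than by manipulating the gap inequality of Theorem~\ref{theorem_fl}. A tempting first move is to divide the inequality $s_i^{\prime} - s_i \le s_j^{\prime} - s_j$ through by the positive scores, but this does not give the claim: the gap bound together with $s_i, s_j > 0$ is genuinely too weak to order the ratios (one can exhibit values that satisfy $s_i^{\prime}-s_i \le s_j^{\prime}-s_j$ with $s_i,s_j>0$ yet have $s_i^{\prime}/s_i > s_j^{\prime}/s_j$). The missing ingredient, which I would exploit instead, is that malicious and benign clients produce non-target influence scores of \emph{opposite} sign. This operates under the same setup as Theorem~\ref{theorem_fl} (all clients selected in each check-point round and a true non-target input with label ${y}$).

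First I would establish that $s_i^{\prime} \le 0$ for the malicious client $i$. By the poisoning-attack definition, a malicious client's update does not decrease the non-target loss, i.e. $\ell_{CE}(x^{\prime}, {y}; w_t + g_t^{(i)}) \ge \ell_{CE}(x^{\prime}, {y}; w_t)$ for every check-point round $t \in \Omega$. Invoking Local Linearity with $\delta = g_t^{(i)}$, the left-hand side equals $\ell_{CE}(x^{\prime}, {y}; w_t) + \nabla \ell_{CE}(x^{\prime}, {y}; w_t)^{\top} g_t^{(i)}$, so $\nabla \ell_{CE}(x^{\prime}, {y}; w_t)^{\top} g_t^{(i)} \ge 0$. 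Since each learning rate $\alpha_t > 0$, every summand $-\alpha_t \nabla \ell_{CE}(x^{\prime}, {y}; w_t)^{\top} g_t^{(i)}$ in the definition of $s_i^{\prime}$ (Equation~\ref{influencescoreFL-two}) is nonpositive, hence $s_i^{\prime} \le 0$.

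Next I would argue symmetrically that $s_j^{\prime} \ge 0$ for the benign client $j$: a benign client's update does not increase the non-target loss, so $\nabla \ell_{CE}(x^{\prime}, {y}; w_t)^{\top} g_t^{(j)} \le 0$ for each $t$, making every summand in $s_j^{\prime}$ nonnegative and thus $s_j^{\prime} \ge 0$. Combining these two facts with the hypotheses $s_i > 0$ and $s_j > 0$ yields $\frac{s_i^{\prime}}{s_i} \le 0 \le \frac{s_j^{\prime}}{s_j}$, which is exactly the claimed ordering.

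The only real obstacle here is conceptual rather than computational: recognizing that Theorem~\ref{theorem_fl} alone is insufficient, and that the proof should instead lean on the sign structure baked into the definition (this is precisely where the two-dimensional influence score earns its keep). Once that is seen, the remaining work is a routine per-round application of Local Linearity together with $\alpha_t > 0$; the only point demanding care is tracking how the leading minus sign in the influence-score formula flips the two loss inequalities into the correct sign statements for $s_i^{\prime}$ and $s_j^{\prime}$.
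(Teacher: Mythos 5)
Your proof is correct, but it follows a genuinely different route from the paper's. The paper's own proof (Appendix~\ref{appendix_proof_corollary_fl}) does not divide the gap inequality of Theorem~\ref{theorem_fl} either; it reuses the two component inequalities established along the way, $s_i \ge s_j$ (Equation~\ref{proof_si_sj}) and $s_i^{\prime} \le s_j^{\prime}$ (Equation~\ref{proof_si_prime_sj_prime}), and then chains $\frac{s_i^{\prime}}{s_i} \le \frac{s_i^{\prime}}{s_j} \le \frac{s_j^{\prime}}{s_j}$. You instead extract sign information from Definition~\ref{assumption_1}: a malicious update does not decrease the non-target loss and a benign update does not increase it, which together with local linearity and $\alpha_t>0$ gives $s_i^{\prime} \le 0 \le s_j^{\prime}$, hence $\frac{s_i^{\prime}}{s_i} \le 0 \le \frac{s_j^{\prime}}{s_j}$. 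The comparison is instructive: the paper's chain stays within the two formalized relative inequalities (Equations~\ref{assumption_1_fl} and~\ref{assumption_2_fl}), but its first step $\frac{s_i^{\prime}}{s_i} \le \frac{s_i^{\prime}}{s_j}$ silently requires $s_i^{\prime} \ge 0$ --- dividing by the larger denominator reverses the inequality when $s_i^{\prime} < 0$, and nothing in the relative assumptions rules that out; indeed, for a malicious client one expects exactly the sign $s_i^{\prime} \le 0$ that you prove. Your argument pays for this by invoking the stronger absolute statements from the prose of Definition~\ref{assumption_1} (not merely the two relative inequalities the appendix confines itself to), but in exchange it is airtight precisely in the typical regime where the paper's chain is fragile. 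Your side observation is also right: the gap inequality plus positivity alone cannot yield the ratio ordering (e.g., $s_i=10$, $s_i^{\prime}=9$, $s_j=1$, $s_j^{\prime}=0.5$ satisfies $s_i^{\prime}-s_i \le s_j^{\prime}-s_j$ with both scores positive, yet $\frac{s_i^{\prime}}{s_i} > \frac{s_j^{\prime}}{s_j}$), so some additional structural fact --- the paper's componentwise inequalities or your sign dichotomy --- is genuinely needed.
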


\begin{proof}
From Equation~\ref{proof_si_sj} and Equation~\ref{proof_si_prime_sj_prime}, we have $s_i^{\prime} \le s_j^{\prime}$ and $s_i \ge s_j$. Since $s_i > 0$ and $s_j > 0$,
 we have:
\begin{align}
\frac{s_i^{\prime}}{s_i } \le \frac{s_i^{\prime}}{s_j } \le \frac{s_j^{\prime}}{s_j },
\end{align}
which completes the proof.
\end{proof}

This directly implies the following:

\begin{coro}
If $s_i, s_j > 0$, then $\frac{s_i'}{s_i} \le \frac{s_j'}{s_j}$.
\end{coro}

\begin{remark}
This explains why \alg can use the ratio $s_i'/s_i$ to distinguish malicious clients from Category I benign clients when both fall into the same high-$s_i$ cluster.
\end{remark}

\begin{table}[!t]
  \centering
  \caption{Dataset statistics.}
    \label{data_statistics}
    \begin{tabular}{|c|c|c|c|}
     \hline
    \multirow{2}[2]{*}{Dataset } & \multirow{2}[2]{*}{\# Training} & \multirow{2}[2]{*}{\# Testing} & \multirow{2}[2]{*}{\# Classes} \\
          &       &       &  \\
    \hline
    \hline
    CIFAR-10     &  50,000     &    10,000   & 10 \\
     \hline
    Fashion-MNIST     &  60,000     &  10,000     & 10 \\
    \hline
    MNIST     &   60,000    &    10,000    & 10 \\
    \hline
    Sentiment140     &    72,491   &  358     &  2\\
    \hline
    ImageNet-Fruits     &    13,000   &  500     & 10  \\
    \hline
    \end{tabular}
\end{table}

\begin{table}[!t]
	\caption{CNN architecture for Fashion-MNIST and MNIST.}
	\centering
	  	\vspace{1mm}
		\footnotesize 
 	\label{cnn_arch}
	\begin{tabular}{|c|c|} \hline 
		{Layer} & {Size} \\ \hline
		{Input} & { $28\times28\times1$}\\ \hline
		{Convolution + ReLU} & { $3\times3\times30$}\\ \hline
		{Max Pooling} & { $2\times2$}\\ \hline
		{Convolution + ReLU} & { $3\times3\times50$}\\ \hline
		{Max Pooling} & { $2\times2$}\\ \hline
		{Fully Connected + ReLU} & {100}\\ \hline
		{Softmax} & {10}\\ \hline
	\end{tabular}
\end{table}

\begin{table*}[!t]\renewcommand{\arraystretch}{1.2}
	\centering
    \fontsize{7.5}{9}\selectfont
    \addtolength{\tabcolsep}{-1pt}
\caption{Default parameter setting. Since ImageNet-Fruits only has 40 clients and 8 malicious clients, and ``min\_cluster\_size'' is set to 7, when performing clustering using HDBSCAN, we duplicate the influence scores of all clients before clustering, resulting in a total of 80 influence scores.}
 	\label{fl_para_setting}
	\begin{tabular}{|c|ccccc|}
	\hline
	     Parameter    & \multicolumn{1}{c|}{CIFAR-10}      & \multicolumn{1}{c|}{Fashion-MNIST}     & \multicolumn{1}{c|}{MNIST}   & \multicolumn{1}{c|}{Sentiment140}      & ImageNet-Fruits \\ \hline
	 \# clients   & \multicolumn{4}{c|}{100} & 40                                                                                                                                                                \\ \hline
	 \# malicious clients   & \multicolumn{4}{c|}{20}    &   8    \\ \hline
	 \# rounds              & \multicolumn{1}{c|}{1500}              & \multicolumn{1}{c|}{2000}              & \multicolumn{1}{c|}{2000}        & \multicolumn{1}{c|}{1500}      & 1000                                                        \\ \hline
	 \# local training epochs   & \multicolumn{5}{c|}{1}                                                                                                                                                               \\ \hline
	Batch size             & \multicolumn{1}{c|}{64}                & \multicolumn{1}{c|}{32}                & \multicolumn{1}{c|}{32}         & \multicolumn{1}{c|}{32}           & 64                                                         \\ \hline
	Learning rate          & \multicolumn{1}{c|}{$1\times 10^{-2}$} & \multicolumn{1}{c|}{$6\times 10^{-3}$} & \multicolumn{1}{c|}{$3\times 10^{-4}$} & \multicolumn{1}{c|}{\makecell{$1\times 10^{-1}$ (decay at the 800th \\  epoch with factor 0.5) } } &  $1\times 10^{-2}$  \\ \hline
	
	 \# check points         & \multicolumn{1}{c|}{150}               & \multicolumn{1}{c|}{200}               & \multicolumn{1}{c|}{200}       & \multicolumn{1}{c|}{150}         & 100                                                         \\ \hline 
	 \ min\_cluster\_size & \multicolumn{5}{c|}{7}  \\ \hline
	\end{tabular}
\end{table*}

\begin{figure}[!t]
\centering 
\subfloat[MNIST]{\includegraphics[width=0.122 \textwidth]{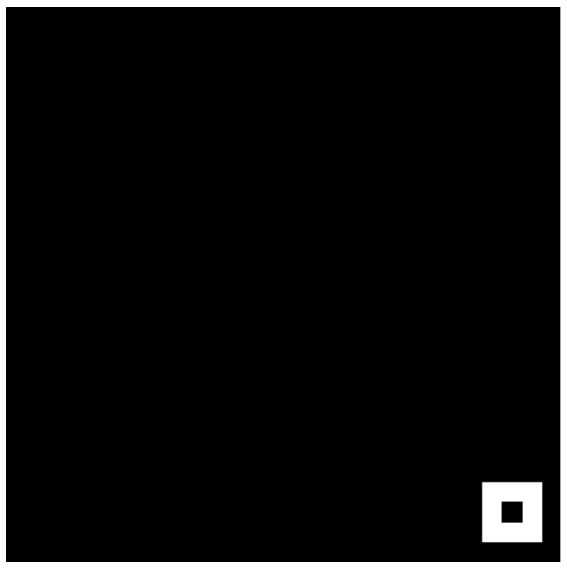}\label{fig:mnist_trigger}}
\quad \quad \quad 
\subfloat[ImageNet-Fruits]{\includegraphics[width=0.122 \textwidth]{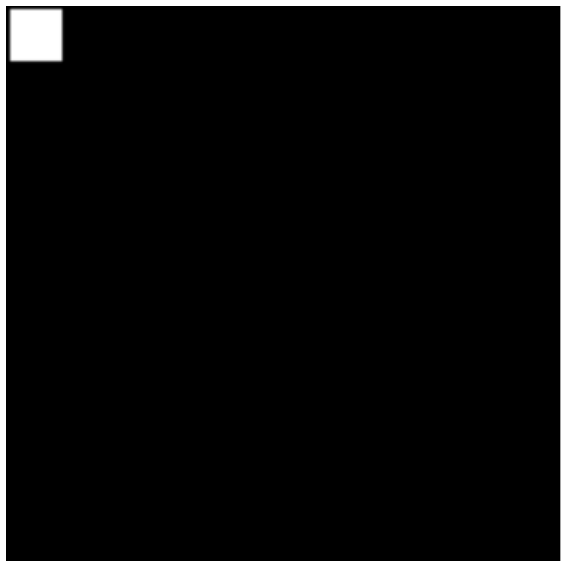} \label{fig:imagenet_trigger}}
 \caption{Triggers in MNIST and ImageNet-Fruits datasets.}
\end{figure}

\section{Dataset Description}
\label{sec:append_dataset}

We conduct our experiments using five diverse benchmark datasets: four image datasets (CIFAR-10, Fashion-MNIST, MNIST, and ImageNet-Fruits) and one text dataset (Sentiment140). 
Table~\ref{data_statistics} summarizes their key statistics.

{\bf CIFAR-10~\cite{krizhevsky2009learning}.} This is a commonly employed dataset in image classification task, comprising 50,000 training examples and 10,000 testing examples. Each input is a 3-channel color image of 32$\times$32 pixels in size and belongs to one of ten classes.  
\textit{Model: ResNet-20~\cite{he2016deep}, implemented using MXNet. License: Apache License 2.0. Dataset License: MIT License. \url{https://www.cs.toronto.edu/~kriz/cifar.html}}

{\bf Fashion-MNIST~\cite{xiao2017/online}.} This dataset consists of 70,000 grayscale images of fashion items, divided into 60,000 training examples and 10,000 testing examples. Each input is of 28$\times$28 pixels in size and belongs to one of ten classes.  
\textit{Model: A custom CNN architecture implemented using MXNet (Table~\ref{cnn_arch}). Dataset License: MIT License. \url{https://github.com/zalandoresearch/fashion-mnist}}

{\bf MNIST~\cite{lecun2010mnist}.} Like Fashion-MNIST, MNIST also contains 70,000 1-channel grayscale images, split into 60,000 training examples and 10,000 testing examples. Each input is a 28$\times$28 pixel image of a handwritten digit. The dataset includes ten classes with each class corresponding to a digit from 0 to 9.  
\textit{Model: Same CNN architecture as Fashion-MNIST, implemented using MXNet. Dataset License: Creative Commons Attribution-Share Alike 3.0. \url{http://yann.lecun.com/exdb/mnist/}}

{\bf Sentiment140~\cite{go2009twitter}.} This is a two-class text classification dataset for sentiment analysis. The dataset is collected from Twitter users. In our experiments, we adopt users with at least 50 tweets, which results in 927 users. Each user has a pre-defined set of training and testing tweets. For our considered users, we have 72,491 training tweets and 358 testing tweets in total.  
\textit{Model: LSTM~\cite{hochreiter1997long}, implemented using MXNet. License: Apache License 2.0. Dataset License: Other (academic use only). \url{https://huggingface.co/datasets/stanfordnlp/sentiment140}}

{\bf ImageNet-Fruits~\cite{cazenavette2022dataset}.} This is an image classification dataset comprising 128$\times$128 pixel color images. It represents a subset of the larger ImageNet-1k~\cite{deng2009imagenet} dataset, specifically curated to include ten fruit categories.  
\textit{Model: ResNet-50~\cite{he2016deep}, implemented using MXNet. License: Apache License 2.0. Dataset License: ImageNet terms (non-commercial research only). \url{https://image-net.org/download}}

\section{Poisoning Attack Description}\label{sec:append_attack}
{\bf Scaling~\cite{bagdasaryan2020backdoor}}. Following~\cite{bagdasaryan2020backdoor}, malicious clients duplicate their local data, embed a trigger, relabel these copies with the \emph{target label}, and train on the mix of original and duplicated samples. Furthermore, malicious clients scale their updates by a factor $\gamma$ before sending them to the server. We set $\gamma=1$ by default, since it is stealthy yet still effective. Triggers follow~\cite{bagdasaryan2020backdoor} for CIFAR‑10 and~\cite{gu2017badnets} for Fashion‑MNIST; those for MNIST and ImageNet‑Fruits appear in Fig.~\ref{fig:mnist_trigger} and Fig.~\ref{fig:imagenet_trigger}. For Sentiment140 we insert the phrase `debug FLpoisoning' in place of two consecutive words. Target labels are 2 for CIFAR‑10, 0 for Fashion‑MNIST and MNIST, `negative' for Sentiment140, and 2 for ImageNet‑Fruits.

{\bf A little is enough (ALIE)~\cite{baruch2019little}}. The attacker in ALIE attack uses the same strategy as that in Scaling attack to embed the triggers into duplicated local training inputs and set their labels as target labels on the malicious clients. However, instead of scaling the model updates, the malicious clients  carefully craft their  model updates via solving an optimization problem. 

{\bf Edge~\cite{wang2020attack}}. The attacker in Edge attack injects some training examples (called \emph{edge-case examples}) labeled as the target label into the malicious clients' local training data, which are  from a  distribution different from that of the learning task's overall training data.  Each malicious client trains its local model using the original local training examples and the edge-case ones following the FL algorithm. In our experiments, for CIFAR-10 and Sentiment140 datasets, we use the edge-case examples respectively designed for CIFAR-10 and Sentiment140 datasets in~\cite{wang2020attack}. For the Fashion-MNIST and MNIST datasets, we use the edge-case examples designed for EMNIST dataset in~\cite{wang2020attack}. For ImageNet-Fruits dataset, we use unripe banana images from~\cite{banana-ripening-process_dataset} as edge-case examples and label them as `cucumber'. The target inputs are from the same dataset as the edge-case examples.  

\section{Extending GAS to FL} \label{appendix_adapt_GAS}
Since the total number of training examples possessed by all clients is much larger than the number of clients, when we detect malicious training examples, we use HDBSCAN to divide the examples into a big cluster (size is at least half of the whole training dataset size) and \emph{outliers} with respect to their influence scores by setting ``\emph{min\_cluster\_size}'' as $|D|/2+1$, where $|D|$ is the whole training dataset size. Unlike \algns, we adopt the Euclidean distance metric for HDBSCAN since GAS only has one-dimensional influence score, for which scaling is meaningless. We then use the outliers to determine the threshold since the big cluster corresponds to the majority clean training examples. Specifically,  HDBSCAN  outputs a \emph{confidence level} (a number between 0 and 1) for each outlier, which indicates the confidence HDBSCAN has at predicting an input as outlier. We adopt a confidence level of 95\%, which is widely used in statistics. Specifically, we treat the outliers whose confidence levels are at least 95\% and whose influence scores are positive as ``true'' outliers. Moreover, we set the smallest influence score of such true outliers as the threshold. 

\section{Simulating Non-iid Setting in FL} \label{exp:noniid_setting}

Following  previous works~\cite{cao2020fltrust,fang2020local}, to simulate the non-iid data distribution across clients, we randomly partition all clients into $C$ groups, where $C$ is the number of classes. We then assign each training example with label $y$ to the clients in one of these $C$ groups with a probability. In particular, a training example with label $y$ is assigned to clients in group $y$ with a probability of $\rho$, and to clients in any other groups with an equal probability of $\frac{1-\rho}{C-1}$, where $\rho\in[0.1, 1.0]$. Within the same group, the training example is uniformly distributed among the clients. Therefore, $\rho$ controls the degree of non-iid. When $\rho=0.1$, the local training data follows an iid distribution in our datasets; otherwise, the clients' local training data is non-iid. A higher value $\rho$ implies a higher degree of non-iid.

\section{Details of Other Experiments}\label{sec:append_otherexp}
 \myparatight{Using true input as non-target input} 
 In our experiments, we use a random input (e.g., a random image) as a non-target input in \algns. When a true input with the target label is available, the server can also use it as the non-target input. Table~\ref{detection_fl_random} compares the results when \alg uses a random input or true input as the non-target input on the five datasets and three attacks.   
We find that random inputs and true inputs achieve comparable results in most cases, except several cases, for which true inputs achieve slightly lower FPRs. These results indicate that if a true input with the target label from the learning task's data distribution is available, the server can use it as the non-target input. 

\begin{table*}[!t]\renewcommand{\arraystretch}{1.2}
	\centering
    \fontsize{8}{9}\selectfont
    \addtolength{\tabcolsep}{-1pt}
	\caption{Results of \alg when using a random or true input as a non-target input.}
	\label{detection_fl_random}
    \begin{tabular}{|c|c|c|c|c|c|c|c|c|c|c|}
    \hline
    \multirow{2}{*}{Dataset} & \multirow{2}{*}{Non-target input} & \multicolumn{3}{c|}{Scaling attack} & \multicolumn{3}{c|}{ALIE attack } & \multicolumn{3}{c|}{Edge attack} \\
    \cline{3-11}          &       & \multicolumn{1}{c|}{DACC} & \multicolumn{1}{c|}{FPR} & \multicolumn{1}{c|}{FNR} & \multicolumn{1}{c|}{DACC} & \multicolumn{1}{c|}{FPR} & \multicolumn{1}{c|}{FNR} & \multicolumn{1}{c|}{DACC} & \multicolumn{1}{c|}{FPR} & \multicolumn{1}{c|}{FNR}  \\
    \hline
     \hline
    \multirow{2}{*}{CIFAR-10} & Random  &  {1.000} &  {0.000} &  {0.000} &  {1.000} &  {0.000} &  {0.000} &  {0.980} &  {0.000} &  {0.100}  \\
    \cline{2-11}          & True &   {1.000} &  {0.000} &  {0.000}  &     {1.000} &  {0.000} &  {0.000}  & 0.970 & 0.013 &  {0.100}  \\
    \hline
     \hline
    \multirow{2}{*}{Fashion-MNIST} & Random &  {1.000} &  {0.000} &  {0.000} &  {1.000} &  {0.000} &  {0.000} &  {1.000} &  {0.000} &  {0.000} \\
    \cline{2-11}          & True &   {1.000} &  {0.000} &  {0.000}  &   {1.000} &  {0.000} &  {0.000} &  {1.000} &  {0.000} &  {0.000} \\
    \hline
     \hline
    \multirow{2}{*}{MNIST} & Random &  {1.000} &  {0.000} &  {0.000} &  {1.000} &  {0.000} &  {0.000} & 1.000 & 0.000 &  {0.000} \\
    \cline{2-11}          & True &   {1.000} &  {0.000} &  {0.000}  &    {1.000} &  {0.000} &  {0.000} &  {1.000} &  {0.000} &  {0.000}   \\
    \hline
     \hline
    \multirow{2}{*}{Sentiment140} & Random & 0.980 & 0.025 &  {0.000} &  {1.000} &  {0.000} &  {0.000} & 0.970 & 0.038 &  {0.000}  \\
    \cline{2-11}          & True &   {0.990} &   {0.013} &  {0.000}  &    {1.000} &  {0.000} &  {0.000} &  {0.990}  &  {0.013}  &  {0.000} \\
    \hline \hline
    \multirow{2}{*}{ImageNet-Fruits} & Random & {1.000} & {0.000} & {0.000}   &   {1.000}  &  {0.000}    & {0.000}  & {0.950}  & {0.031}  & {0.125} \\
    \cline{2-11}          & True & {1.000} & {0.000} & {0.000}   &   {1.000}  &  {0.000}    & {0.000}  & {0.950}  & {0.031}  & {0.125} \\
    \hline
    \end{tabular}
    \vspace{1mm}
\end{table*}

\myparatight{Forensics for clean-label targeted attacks} 
We focus on \emph{dirty-label} attacks, where poisoned samples are relabeled to the \emph{target label}. However, we find that \alg also works well for \emph{clean-label} attacks, where labels remain unchanged, since it relies on clients' model updates rather than training data. Once those updates are backdoored, \alg can still trace back.
We test this using the clean-label attack~\cite{shafahi2018poison} on CIFAR‑10 with FedAvg, training 'cat' images to resemble 'dog' features while keeping their original labels. Under default settings, \alg achieves DACC=0.95, FPR=0.06, and FNR=0.0, confirming its effectiveness.

\begin{figure}[!t]
    \centering  
    \includegraphics[width=0.4 \textwidth]{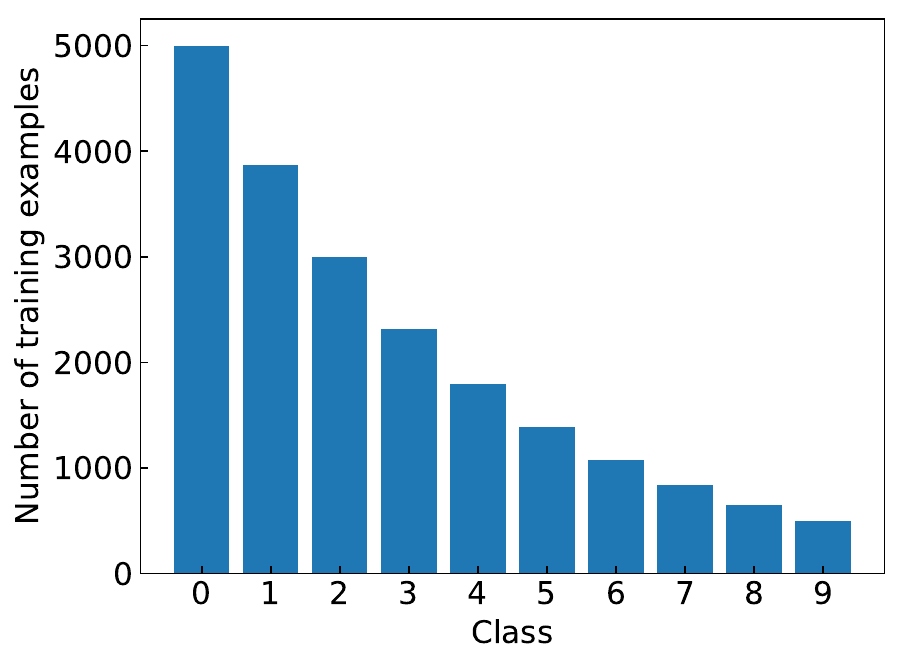}
    \caption{Number of training examples for each class, where class distribution is roughly a power-law. We sample the unbalanced dataset from CIFAR-10. We have 20,431 training examples in total. }
    \label{Fig_cl}
    \vspace{-4mm}
\end{figure}

\begin{table}[!t]\renewcommand{\arraystretch}{1.2}
\addtolength{\tabcolsep}{-1pt}
  \centering
  \fontsize{8}{9}\selectfont
  \caption{Results for centralized learning. The class distributions are shown in Figure~\ref{Fig_cl}.}
  \vspace{1mm}
    \label{centralize}
    \begin{tabular}{|c|c|c|c|}
    \hline
   \multirow{1}{*}{Method}    &  \multicolumn{1}{c|}{DACC} & \multicolumn{1}{c|}{FPR} & \multicolumn{1}{c|}{FNR}  \\
    \hline
     \hline
   PF &    0.831  &  0.189  &  {0.000}       \\ \hline
   \algns-G &    0.828  &  0.191  &  {0.000}      \\  \hline
   \algns-A &    0.826  &  0.194  &  {0.000}       \\  \hline
   \alg &    {0.992}  &  {0.008}  &  {0.005}       \\  \hline
      \algns-True &    {0.998}  &  {0.002}  &  {0.000}       \\  \hline
    \end{tabular}
    \vspace{-3mm}
\end{table}

\begin{table}[!t]\renewcommand{\arraystretch}{1.2}

\addtolength{\tabcolsep}{-1pt}
  \centering
  \fontsize{8}{9}\selectfont
\caption{Storage overhead of \algns.}
\label{tab:storage}
\begin{tabular}{|c|c|c|c|}
\hline
Dataset                & \begin{tabular}[c]{@{}c@{}}\# check\\ points\end{tabular} & \# clients & \begin{tabular}[c]{@{}c@{}} Storage \\ overhead (GB)\end{tabular} \\ \hline \hline
CIFAR-10        & 150             & 100        & 15.22                      \\ \hline
Fashion-MNIST   & 200             & 100        & 10.43                      \\ \hline
MNIST           & 200             & 100        & 10.43                      \\ \hline
Sentiment140    & 150             & 100        & 3.02                       \\ \hline
ImageNet-Fruits & 100             & 40         & 22.20                      \\ \hline
\end{tabular}
\end{table} 

\myparatight{Extending \alg to centralized learning} 
\alg can also be extended to centralized learning by treating each training example as a client.' In this setting, $w_t$ in Equation~\ref{influenceframework} is the model at the $t$th mini-batch, and $g_t^{(i)}$ is the gradient of the loss of $w_t$ on the $i$th example. Table~\ref{centralize} shows results on the dataset sampled from CIFAR-10, where we inject triggers (as in our FL setup) into 10\% of training data with target label 1. \algns-True denotes the use of a true input as the non-target input. Class distributions are shown in Figure~\ref{Fig_cl}. When the dataset is imbalanced, which is similar to non-IID data in FL, and the target label belongs to a majority class, \alg significantly outperforms existing poison-forensics methods.

\begin{figure*}[!t]
  \centering
  \begin{adjustbox}{valign=t}
    \subfloat[Number of check points\label{fig:abl_ckpt_line}]{
      \includegraphics[width=.3\textwidth]{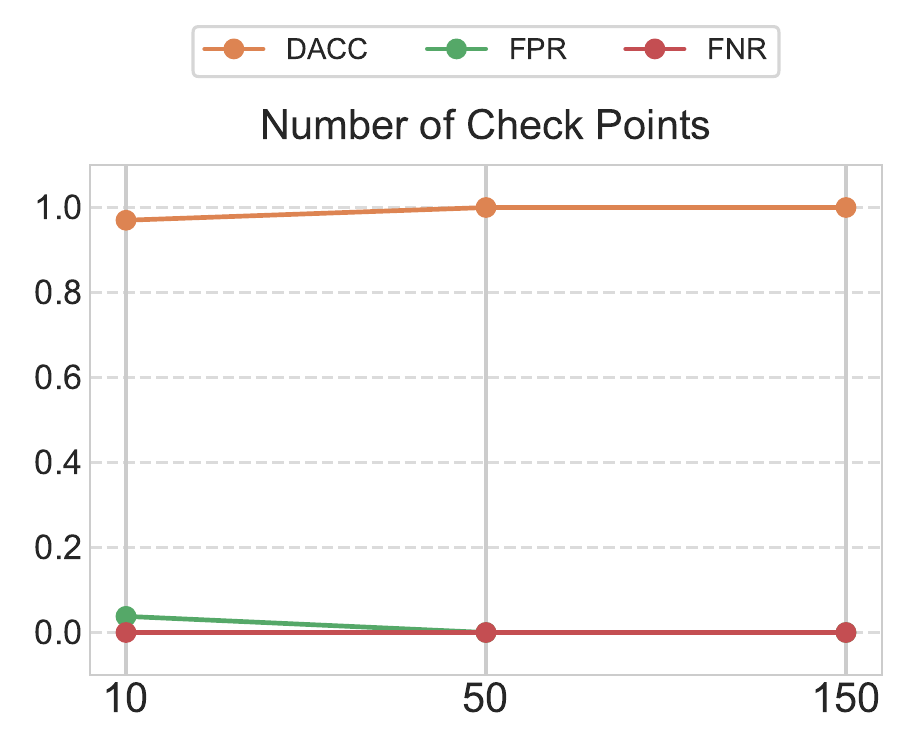}}
    \hfill
    \subfloat[Fraction of selected clients\label{fig:abl_selected_frac}]{
      \includegraphics[width=.3\textwidth]{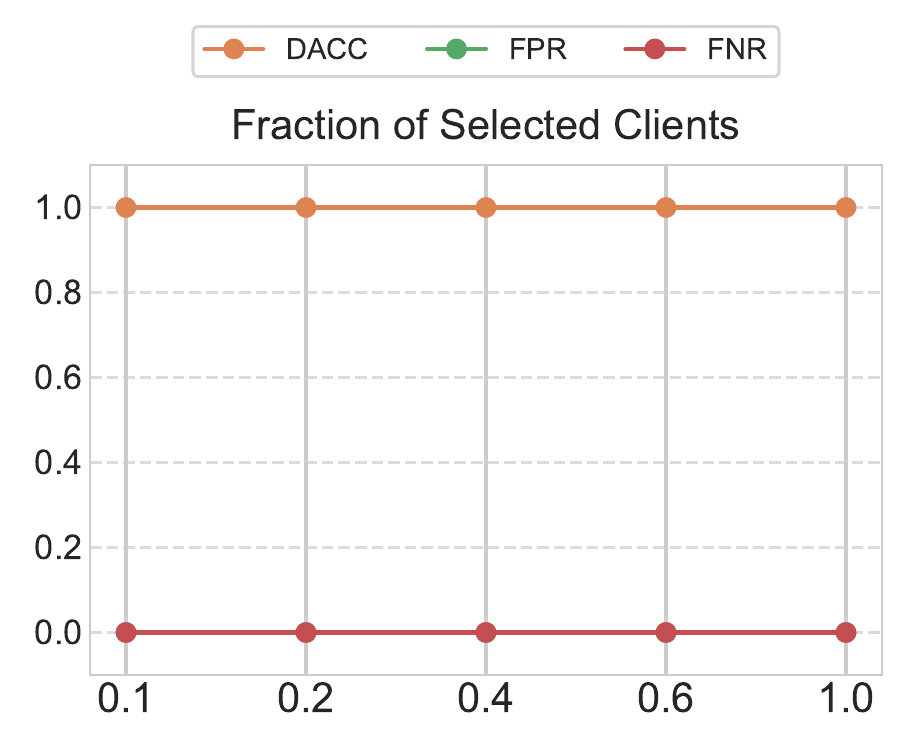}}
    \hfill
    \subfloat[Scaling factor\label{fig:abl_gamma}]{
      \includegraphics[width=.3\textwidth]{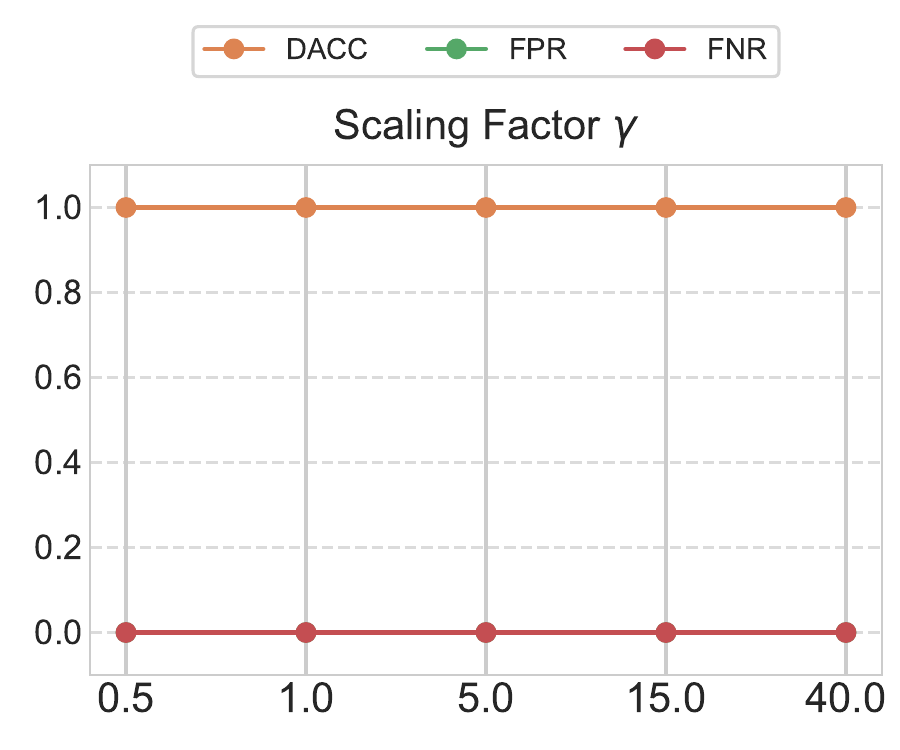}}
  \end{adjustbox}

  \caption{Results of other ablation studies for \alg.}
  \label{fig:abl_append}
\end{figure*}

\section{Other Ablation Studies}\label{sec:append_ablation}
In this Section, we provide other ablation studies for \alg. 

\myparatight{Impact of number of check points and storage overhead} Figure~\ref{fig:abl_ckpt_line} shows the impact of the number of check points on \algns. We observe a trade-off between storage overhead and poison-forensics performance. In particular, when a server saves more check points, which incurs more storage overhead, the server can more accurately detect the malicious clients. We also note that the storage overhead is acceptable for a powerful server to achieve good poison-forensics performance. 
Table~\ref{tab:storage} shows the  storage overhead for different datasets in our default settings. For instance, saving 150 check points for CIFAR-10 requires 15.22GB storage, which is acceptable for a powerful server like a data center.

\myparatight{Impact of fraction of selected clients}Based on Figure~\ref{fig:abl_selected_frac}, \alg works well even if the server selects a small fraction of clients in each training round. The reason is that once a malicious client appears in multiple check-point training rounds, \alg can accumulate its influences to distinguish it with benign clients. Note that in this experiment, we save all training rounds as check points so a malicious client appears in multiple of them. 

\myparatight{Impact of scaling factor $\gamma$} Based on the results in Figure~\ref{fig:abl_gamma}, we observe that \alg works well for a wide range of scaling factors used by the Scaling attack. The reason is that once the Scaling attack is effective, \alg can identify the malicious clients. Previous work~\cite{bagdasaryan2020backdoor} shows that the higher the scaling factor, the more effective the attack, but it also makes the attack more susceptible to detection. We can observe that our \alg can detect all the malicious clients even when the scaling factor is close to 1. In such cases, methods (e.g., FLDetector) that rely on the magnitudes of clients' model updates  to detect malicious clients often fail.

\section{Limitation}\label{appendix:untargeted}
\myparatight{Forensics for untargeted poisoning attacks} 
This work focuses on forensics for \emph{targeted} poisoning attacks. A promising direction for future work is extending \alg to handle \emph{untargeted} attacks~\cite{fang2020local}, where the poisoned model misclassifies many clean inputs, resulting in low test accuracy.
Our current method is less effective in this setting, as the misclassified input in an untargeted poisoning attack is also a non-target input.

\section{Broader Impact}\label{appendix:broader_impact}
Our work proposes \alg, the first method to trace back malicious clients in poisoning attacks to FL. It addresses a critical gap in existing defenses, which primarily focus on preventing attacks during training. \alg provides a complementary line of defense by enabling post-deployment forensics, which is especially important when training-phase defenses fail.

This contribution has positive societal impact, as it helps increase accountability in collaborative learning systems deployed in sensitive domains such as healthcare, finance, and mobile platforms. By identifying malicious participants after attack-induced misclassifications, \alg promotes the development of more trustworthy and robust federated systems.

\begin{table*}[!t]\renewcommand{\arraystretch}{1.2}
\addtolength{\tabcolsep}{-1pt}
  \centering
  \fontsize{8}{9}\selectfont
	\caption{Test accuracy (TACC) and attack success rate (ASR) of different FL aggregation rules under different attacks. 
For the Trim aggregation rule, the trim parameter is set to the number of malicious clients.
The server in FLTrust holds a small and clean root dataset.
In our experiments, the size of the root dataset is set to 50, and the root dataset is drawn from the same distribution as that of the learning task's
overall training data.
For the FLAME aggregation rule, we use the same parameters as in~\cite{nguyen2022flame}.
 We do not show ASR when there is no attack (i.e., “\NA”) because different attacks use different triggers.}
	\vspace{1mm}
	\centering
	\label{table_tacc_asr}
	\subfloat[CIFAR-10]
	{
    \begin{tabular}{|c|c|c|c|c|c|c|c|c|c|c|c|c|}
    \hline
    \multirow{2}{*}{Attack} & \multicolumn{2}{c|}{FedAvg} & \multicolumn{2}{c|}{Trim} & \multicolumn{2}{c|}{Median} & \multicolumn{2}{c|}{FLTrust} & \multicolumn{2}{c|}{FLAME}\\
    \cline{2-11}          & TACC   & ASR   & TACC   & ASR   & TACC   & ASR   & TACC   & ASR  & TACC   & ASR\\
    \hline
    \hline
     No attack & 0.837 & \NA  & 0.769 &  \NA & 0.755 & \NA& 0.811  & \NA & 0.774 &\NA  \\
    \hline
    Scaling attack & 0.831 & 0.682  & 0.777 & 0.950  & 0.780 & 0.930 & 0.816 &0.642 & 0.776 & 0.644  \\
    \hline
    ALIE attack &  0.843 & 0.956  & 0.814 & 0.754  & 0.809  & 0.980  &0.806 &0.968 & 0.780 &0.958 \\
    \hline
    Edge attack & 0.819 & 0.174 & 0.761 & 0.337 & 0.762 & 0.352 & 0.794 &0.056 &0.789 & 0.087 \\
    \hline
    \end{tabular}
    }
	\\
	\subfloat[Fashion-MNIST]
	{
    \begin{tabular}{|c|c|c|c|c|c|c|c|c|c|c|c|c|}
    \hline
    \multirow{2}{*}{Attack} & \multicolumn{2}{c|}{FedAvg} & \multicolumn{2}{c|}{Trim} & \multicolumn{2}{c|}{Median} & \multicolumn{2}{c|}{FLTrust} & \multicolumn{2}{c|}{FLAME}\\
    \cline{2-11}          & TACC   & ASR   & TACC   & ASR   & TACC   & ASR   & TACC   & ASR  & TACC   & ASR\\
    \hline
     \hline
     No attack & 0.900 & \NA  &0.856 &  \NA & 0.864 & \NA&  0.880 & \NA &  0.887 &\NA  \\
    \hline
    Scaling attack & 0.887 & 0.953 &  0.870 & 0.892 & 0.841 & 0.043 & 0.874 &0.037 & 0.890 & 0.024  \\
    \hline
    ALIE attack &  0.889 & 0.941 & 0.809 &  0.113 & 0.764 & 0.040  & 0.876 &0.038 & 0.886 &0.020  \\
    \hline
    Edge attack & 0.886 & 0.990 & 0.862 & 1.000 & 0.856 & 1.000  & 0.861 &0.990 & 0.883 & 0.990\\
    \hline
    \end{tabular}
    }
    	\\
	\subfloat[MNIST]
	{
    \begin{tabular}{|c|c|c|c|c|c|c|c|c|c|c|c|c|}
    \hline
    \multirow{2}{*}{Attack} & \multicolumn{2}{c|}{FedAvg} & \multicolumn{2}{c|}{Trim} & \multicolumn{2}{c|}{Median} & \multicolumn{2}{c|}{FLTrust} & \multicolumn{2}{c|}{FLAME}\\
    \cline{2-11}          & TACC   & ASR   & TACC   & ASR   & TACC   & ASR   & TACC   & ASR  & TACC   & ASR\\
    \hline
     \hline
     No attack & 0.960 & \NA  & 0.948&  \NA & 0.943 & \NA&0.926 & \NA & 0.948 &\NA \\
    \hline
    Scaling attack &   0.958 & 0.950  & 0.921 & 0.013  & 0.936  & 0.010 & 0.926 &0.006 &0.954 & 0.005\\
    \hline
    ALIE attack &  0.958 & 0.944  &  0.788 & 0.045  &  0.929 & 0.013  &0.927 &0.007 & 0.953 &0.005 \\
    \hline
    Edge attack & 0.953  & 0.990 & 0.938 & 0.980 & 0.939 & 0.960& 0.919 & 0.580 &0.953 & 0.070 \\
    \hline
    \end{tabular}
    }
    
    \subfloat[SENT140]
	{
    \begin{tabular}{|c|c|c|c|c|c|c|c|c|c|c|c|c|}
    \hline
    \multirow{2}{*}{Attack} & \multicolumn{2}{c|}{FedAvg} & \multicolumn{2}{c|}{Trim} & \multicolumn{2}{c|}{Median} & \multicolumn{2}{c|}{FLTrust} & \multicolumn{2}{c|}{FLAME}\\
    \cline{2-11}          & TACC   & ASR   & TACC   & ASR   & TACC   & ASR   & TACC   & ASR  & TACC   & ASR\\
    \hline
     \hline
     No attack & 0.660 & \NA  & 0.684 &  \NA & 0.673 & \NA&0.494 & \NA &0.589 &\NA  \\
    \hline
    Scaling attack & 0.659 & 0.995  & 0.616 & 0.985  & 0.645& 1.000 & 0.494 & 1.000  &0.592&0.232\\
    \hline
    ALIE attack &  0.687 & 1.000 & 0.654 & 1.000  & 0.531 & 0.801  & 0.494 & 1.000 & 0.651 & 0.122\\
    \hline
    Edge attack & 0.564  & 0.600 & 0.567 & 0.683 & 0.609 & 0.858 & 0.494 & 1.000 & 0.581 &0.175 \\
    \hline
    \end{tabular}
    }

     \subfloat[ImageNet-fruits]
	{
    \begin{tabular}{|c|c|c|c|c|c|c|c|c|c|c|c|c|}
    \hline
    \multirow{2}{*}{Attack} & \multicolumn{2}{c|}{FedAvg} & \multicolumn{2}{c|}{Trim} & \multicolumn{2}{c|}{Median} & \multicolumn{2}{c|}{FLTrust} & \multicolumn{2}{c|}{FLAME}\\
    \cline{2-11}          & TACC   & ASR   & TACC   & ASR   & TACC   & ASR   & TACC   & ASR  & TACC   & ASR\\
    \hline
     \hline
      No attack & 0.517 & \NA  & 0.537&  \NA & 0.509 & \NA& 0.480& \NA & 0.492 &\NA  \\
    \hline
    Scaling attack & 0.494 & 0.881 & 0.465 &0.749 & 0.467 & 0.842 &0.473 & 0.108 & 0.494 & 0.102 \\
    \hline
    ALIE attack &  0.488 & 1.000 &0.486 & 0.113 & 0.469 & 0.132  &0.482 & 0.115 & 0.502 & 0.952 \\
    \hline
    Edge attack & 0.514  & 0.283 & 0.529 & 0.279 & 0.502 & 0.219  &0.490 & 0.377 & 0.470 & 0.465 \\
    \hline
    \end{tabular}
    }
\end{table*}

\begin{table*}[!t]\renewcommand{\arraystretch}{1.2}
\addtolength{\tabcolsep}{-1pt}
  \centering
  \fontsize{8}{9}\selectfont
	\caption{Results of PF and \algns-G when they use the clients’ model updates.}
    {
	\centering
	\addtolength{\tabcolsep}{-0.65pt}
	\label{detection_fl_rebuttal}
    \begin{tabular}{|c|c|c|c|c|c|c|c|c|c|}
    \hline
    \multirow{2}{*}{Method} & \multicolumn{3}{c|}{Scaling attack} & \multicolumn{3}{c|}{ALIE attack } & \multicolumn{3}{c|}{Edge attack}\\
    \cline{2-10}          & \multicolumn{1}{c|}{DACC} & \multicolumn{1}{c|}{FPR} & \multicolumn{1}{c|}{FNR} & \multicolumn{1}{c|}{DACC} & \multicolumn{1}{c|}{FPR} & \multicolumn{1}{c|}{FNR} & \multicolumn{1}{c|}{DACC} & \multicolumn{1}{c|}{FPR} & \multicolumn{1}{c|}{FNR}  \\
    \hline
     \hline
     PF  &  0.900   & 0.125 & 0.000 &  0.900    & 0.125 & 0.000 & 0.920 & 0.100 & 0.000  \\ \hline
     \algns-G  &  0.850 & 0.150 & 0.150 &  0.880 & 0.150 & 0.000 & 0.880 & 0.138 & 0.050 \\ \hline
    \end{tabular}}
\end{table*}

\end{document}